\newtheorem{theorem}{Theorem}
\newtheorem*{theorem*}{Theorem}
\newtheorem{lemma}{Lemma}
\newtheorem*{lemma*}{Lemma}
\newtheorem{definition}{Definition}
\newtheorem{assumption}{Assumption}
\newcommand{\E}{\mathbb{E}}
\title{Extracting Latent State Representations \\
           with Linear Dynamics from Rich Observations}
\author{Abraham Frandsen\thanks{Duke University} \and Rong Ge\thanks{Duke University}}
\begin{document}
\maketitle

\begin{abstract}
Recently, many reinforcement learning techniques were shown to have provable guarantees in the simple case of linear dynamics, especially in problems like linear quadratic regulators. However, in practice, many reinforcement learning problems try to learn a policy directly from rich, high dimensional representations such as images. Even if there is an underlying dynamics that is linear in the correct latent representations (such as position and velocity), the rich representation is likely to be nonlinear and can contain irrelevant features. In this work we study a model where there is a hidden linear subspace in which the dynamics is linear. For such a model we give an efficient algorithm for extracting the linear subspace with linear dynamics. We then extend our idea to extracting a nonlinear mapping, and empirically verify the effectiveness of our approach in simple settings with rich observations.
\end{abstract}

\section{Introduction}



Reinforcement learning has made tremendous progress recently, achieving strong performance in difficult problems like go \citep{silver2017mastering} and Starcraft \citep{vinyals2019alphastar}. 
 A common theme in the recent progress is the use of neural networks to handle the cases when the system dynamics and policy are both highly nonlinear. However, theoretical understanding for reinforcement learning has been mostly limited to the tabular setting (where the number of state/actions is small) or when the underlying dynamics of the system is linear (see Section~\ref{sec:related}).

Requiring the dynamics to be linear is especially limiting for problems with {\em rich}, high dimensional output, e.g. manipulating a robot from video frames or playing a game by observing pixel representations. Consider a simple system where we control an object by applying forces to it: the state of the object (position and velocity) can actually be linear according to physical laws. However, if our observation is a rendering of this object in a 3-d environment, the observation contains a lot of redundant information and is not going to have linear dynamics. Such problems can potentially be solved by learning a {\em state representation mapping} $\phi$ that maps the complicated observation to states that satisfy simpler dynamics. State representation learning is popular in practice, see the survey by \citet{lesort2018state} and more references in Section~\ref{sec:related}. We are interested in the question of when we can provably extract a state representation that encodes linear dynamics.

We first consider a simple theoretical model where the full observation $x$ does not have linear dynamics, but there exists an unknown subspace $V$ where the projection $y = P_V x$ has linear dynamics. This corresponds to the case when the state representation mapping is a linear projection. We use the ideas of learning {\em inverse models}, which try to predict the action that was used to transition from one state to another. We show that a convex relaxation of the inverse model objective provably learns the unknown subspace $V$. 

Of course, in more complicated settings one might need a nonlinear mapping in order to extract a latent space representation that has linear dynamics. In particular, one can think of the last layer of a neural network as providing a nonlinear state representation mapping. 
We extend our approach to the nonlinear setting, and show that if we can find a solution to a similar nonconvex optimization problem with 0 loss, then the representation will have nontrivial linear dynamics. 

In the remainder of the paper, we first discuss related works in Section~\ref{sec:related}. We then introduce our model and discuss how one can formalize learning a state representation mapping as an optimization problem in Section~\ref{sec:model_intro}. In Section~\ref{sec:model_detail} we first consider the case of a linear state representation mapping, and prove that our algorithm indeed recovers the underlying state representation. We then extend the model to nonlinear state representation mapping. Finally in Section~\ref{sec:experiments} we show our approach can be used to learn low dimensional state representations for simple RL environments.
\section{Related Work}
\label{sec:related}

\paragraph{State Representation Learning with Rich Observations}	
Several recent papers have addressed the problem of state representation learning (SRL) in control problems.
\citet{lesort2018state} survey the recent literature and identify four categories that describe many SRL approaches: reconstructing the observation, learning a forward dynamics model, learning an inverse dynamics model, and using prior knowledge to constrain the state space.
\citet{raffin2019decoupling} evaluate many of these SRL approaches on robotics tasks and show how to combine the strengths of the different methods.
Several papers adopt the tactic of learning inverse models \citep{pathak2017curiosity,zhang2018decoupling, shelhamer2016loss} and demonstrate its effectiveness in practice, but they lack a theoretical analysis of the approach. Our work aims to help fill this gap.

Common domains with rich observations include raw images or video frames from video games \citep{anand2019unsupervised, pathak2017curiosity},
robotics environments \citep{higgins2017darla, finn2016deep}, and renderings of classic control problems \citep{watter2015embed, van2016stable}, and deep learning methods have enabled success in this space.
Recently, \citet{ha2018world} introduced a method for learning ``world models'' which learn low-dimensional representations and dynamics in an unsupervised manner, enabling simple linear policies to achieve effective control. 
\citet{hafner2019dream} utilize world models in conjunction with latent imagination to learn behaviors that achieve high performance in terms of reward and sample-efficiency on several visual control tasks. 

On the theoretical side, \citet{du2019good} investigate whether good representations lead to sample-efficient reinforcement learning in the context of MDPs, showing exponential lower bounds in many settings.  
Our setting circumvents these negative results because the representations that we learn transform the nonlinear problem into a linear (and hence tractable) control problem. 
Other recent works \citep{du2019provably, misra2019kinematic} study the Block MDP model,  in which the action space is finite and the potentially infinite, high-dimensional observation space is generated from a finite set of latent states.
Our model, by contrast, considers continuous state and action spaces, and we assume the existence of a latent \emph{subspace} that linearly encodes the relevant dynamics.  

\paragraph{Linear Dynamical Systems and Control Problems}
Linear dynamical systems and control problems have been extensively studied for many decades and admit 
efficient, robust, and provably correct algorithms. 
We mention a few recent theoretical developments in this area.
For the problem of system identification, \citet{qin2006overview} gives a review of subspace identification methods; the approach we develop in this work, while for a different setting, is somewhat similar to the regression approaches described in the review.
Other recent works  analyze gradient-based methods for system identification \citep{hardt2018gradient}, policy optimization \citep{fazel2018global}, and online control \citep{cohen2018online} in the setting of linear dynamical systems and quadratic costs.

\paragraph{Koopman Operator Theory}
Many recent works have approached the problem of linearizing the dynamics by finding a \emph{lifting} based on the Koopman operator (an infinite-dimensional linear operator representation). 
\citet{williams2015data} propose the extended dynamic mode decomposition algorithm to approximate the leading eigenfunctions of the Koopman operator, and variants of this idea include the use of kernels \citep{kawahara2016dynamic}, dictionary learning \citep{li2017extended}, and neural networks \citep{lusch2018deep, yeung2019learning}, as well as extensions to  control problems \citep{folkestad2019extended}.
The use of the Koopman operator is not appropriate in our setting with high-dimensional rich observations, since we are interested in finding a low-dimensional mapping that linearizes only a subspace of the original system.

\section{Hidden Subspace Model and State Representation Learning}
\label{sec:model_intro}
\newcommand{\R}{\mathbb{R}}

In this section we first introduce a basic model which allows a linear state representation mapping. Later we show that the linear state representation can be learned efficiently.

We follow the notation of a discrete-time control system, where we use $x_t$ to denote the state at the $t$-th step, $u_t$ to denote the control signal (action) at the $t$-th step, and $f$ denotes the dynamics function, where $x_{t+1} = f(x_t, u_t).$
Note that here we assume the dynamics $f$ is deterministic, but we will later see that only some parts of $f$ need to be deterministic. 
A \emph{state representation mapping} is a function $\phi:\R^d\to \R^r$ that maps $x_t$ to a different space (usually $r\ll d$) such that the dynamics governing the evolution of $\phi(x_t)$ are simpler, e.g. linear.

\subsection{Hidden Subspace Model} We consider a model with a latent ground truth state representation $h_t \in \R^r$, which satisfies linear dynamics:
\[
h_{t+1} = \bar{A} h_{t} + \bar{B} u_{t}.  
\]
The high-dimensional observation $x_t$ is related to the latent state by $x_t = Vh_t + V^\perp g(h_t).$
Here, $V\in \R^{d\times r}$ denotes a basis for a subspace, and $V^\perp \in \R^{d\times (d-r)}$ is a basis for the orthogonal subspace, and $g: \R^r \to \R^{d-r}$ is an arbitrary nonlinear function that captures the redundant information in $x_t$. We will use $y_t$ to denote $Vh_t$ and $z_t$ to denote $V^\perp g(h_t)$. Note that $g$ does not need to be a function, it just needs to specify a conditional distribution $z_t|h_t$. The model is illustrated in Figure~\ref{fig:model}. 

\begin{figure}[t]
\vskip 0.2in
\begin{center}
\centerline{\includegraphics[width=.3\textwidth]{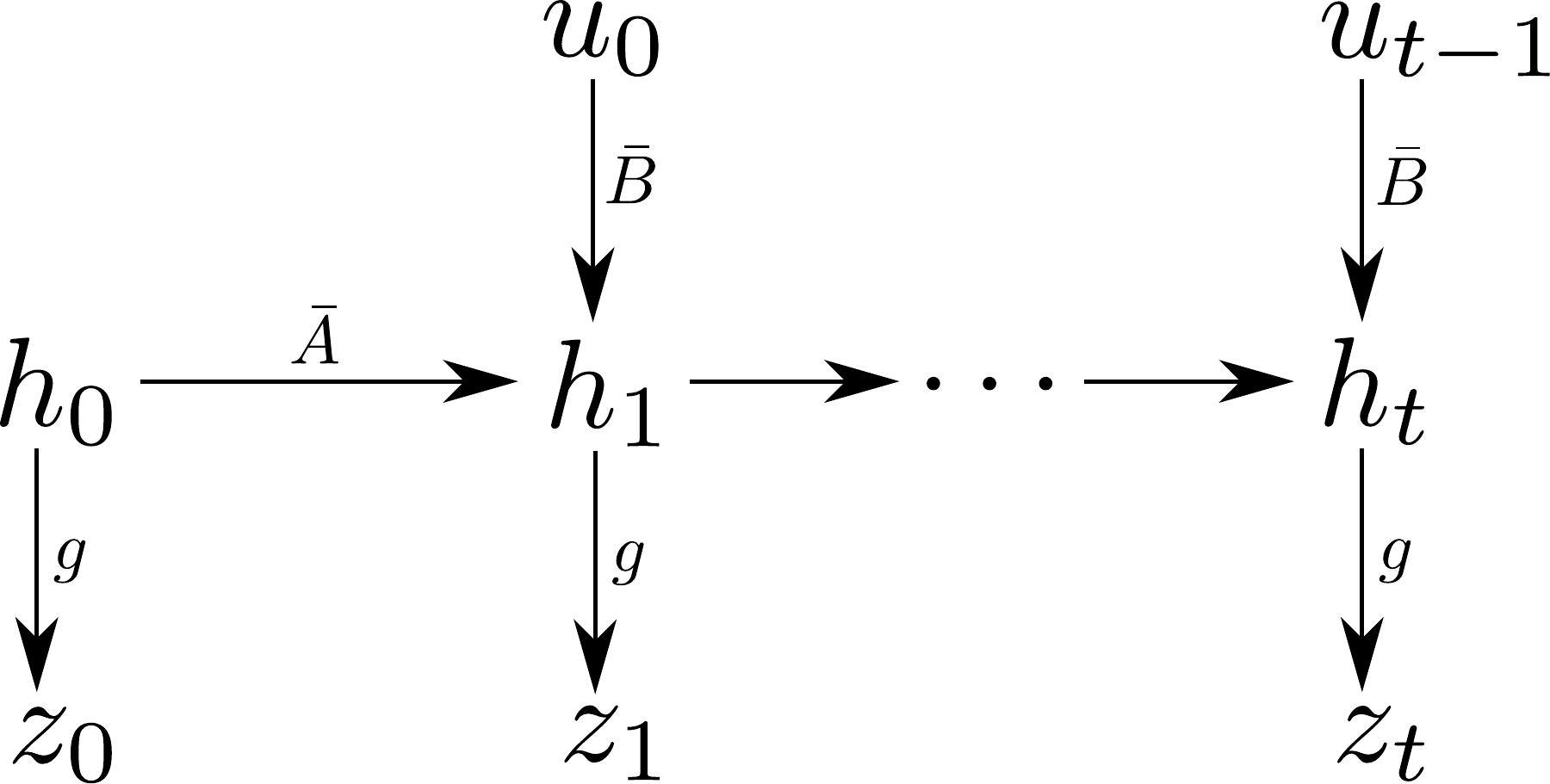}}
\caption{A graphical illustration of our hidden subspace model.
The hidden states $h_i$ evolve according to a linear control system 
and generate nonlinear features $z_i$.}
\label{fig:model}
\end{center}
\vskip -0.2in
\end{figure}

In this model, $y_t$ also satisfies linear dynamics, namely 
$y_{t+1} = V\bar{A}V^+ y_t + V\bar{B} u_{t-1} =: Ay_t + Bu_{t-1}.$
Here we define $A = V\bar{A}V^+$ and $B = V\bar{B}$. 
The ground truth mapping $\phi$ should map $x_t$ to $h_t$ (or any invertible linear transformation of $h_t$). 
To find this mapping it suffices to find the subspace $V$.


\subsection{Learning Linear State Representations}


To learn the state representation, we adopt the idea of learning an inverse model. Consider taking random actions $u_0,u_1,...,u_{i-1}$ from a random initial state $x_0$. We aim to predict the last action $u_{i-1}$, given $x_0$, $x_i$ and the previous actions. As the underlying process is linear, it makes sense to use a linear predictor $Px_i - L_i x_0 - \sum_{k=1}^{i-1} T_k u_{i - 1 - k}$. Therefore a natural loss function to consider is
\begin{equation}
\label{eq:linear_relaxation}
\underset{\theta}{\min} \frac{1}{2}\mathbb{E} \sum_{i=1}^r \| Px_i - L_ix_0 -\sum_{k=1}^{i-1}T_ku_{i-1-k} - u_{i-1}\|_2^2
\end{equation}
Here, $\theta$ is the tuple of parameters $(P,L_1,\ldots,L_r, T_1,\ldots,T_{r-1})$, and the expectation is taken over the randomness of $x_0, u_0, \ldots, u_{r-1}$.
As we will see later, the optimal solution for this loss function can be related to matrices $A$ and $B$. 

Note that this is a convex objective function. Our first results show that when $A, B$ are in general position, and $g$ is really nonlinear (which we formalize in Section~\ref{sec:model_detail}), one can recover the subspace $V$, and this extends to the finite sample setting.

\begin{theorem}[informal] When the parameters $A, B$ are in general position and the redundant information $z_t$ is not linearly correlated with $y_t$, for the minimum norm solution to objective function 
solution \eqref{eq:linear_relaxation}, the columns of every matrix are in $V$, and the union of the column spans is equal to $V$. In the finite sample setting, the same holds provided the number of samples is polynomial in $d, r, l$.
\end{theorem}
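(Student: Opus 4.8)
The plan is to first characterize the minimizer of the convex objective \eqref{eq:linear_relaxation} exactly, then argue that the "general position" and "nonlinearity" hypotheses force every parameter block to have its columns inside $V$ and their spans to exhaust $V$. Since \eqref{eq:linear_relaxation} is a least-squares problem, I would begin by writing the normal equations. For each $i$, expanding $x_i = y_i + z_i$ and using the linear recursion $y_{i} = A^i y_0 + \sum_{k=1}^{i} A^{i-k} B u_{k-1}$, I would substitute the true dynamics into the residual $P x_i - L_i x_0 - \sum_{k=1}^{i-1} T_k u_{i-1-k} - u_{i-1}$. The key observation is that there is a "natural" choice of parameters that perfectly predicts $u_{i-1}$ from the $y$-components alone — essentially inverting the map $u_{i-1} \mapsto y_i$ via the pseudoinverse $B^+$, with $L_i$ and the $T_k$ cancelling the contributions of $y_0$ and the earlier actions. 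I would verify this candidate achieves zero loss on the $y$-part, so the only residual that can remain comes from the $z_i = V^\perp g(h_i)$ terms, which live in the orthogonal subspace.

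The second step is to show that at the minimum-norm optimum, $P$ (and likewise each $L_i$, $T_k$) must kill the $V^\perp$ directions, i.e. $P V^\perp = 0$, so that $P$ acts only on $V$. Here is where the nonlinearity of $g$ enters: if $P$ had a nonzero component along $V^\perp$, it would attempt to use $z_i$ to predict $u_{i-1}$; but since $z_i$ is "not linearly correlated with $y_t$" (the formal hypothesis to be stated in Section~\ref{sec:model_detail}), any such use either increases the loss or, at best, leaves it unchanged while increasing the Frobenius norm of the parameters — so the minimum-norm solution has all blocks supported on $V$. Concretely, I would decompose each matrix block into its $V$ and $V^\perp$ parts, show the objective splits as a sum of a term depending only on the $V$-parts plus a term that is minimized (and norm-minimized) by setting the $V^\perp$-parts to zero, using that the second moment matrix of $(y_0, u_0, \dots, u_{r-1}, z_1, \dots, z_r)$ has the relevant cross-covariances vanish or be "generic."

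Third, for the "union of column spans equals $V$" direction, I would use the general-position assumption on $A, B$: the candidate solution's blocks involve $B^+$, $A$, and products thereof, and genericity of $(A,B)$ guarantees that the controllability-type matrix $[B, AB, \dots, A^{r-1}B]$ has full column rank $r$, so the spans of the $T_k$ together with $P B$ sweep out all of $V$. I would also need to rule out degenerate alternative optima — i.e. show the optimum is essentially unique on the $V$-part — which follows because the Gram matrix of the regressors restricted to $V$ is positive definite under general position (the actions $u_i$ are independent and the $y$-dynamics are controllable). Finally, the finite-sample claim follows by a standard perturbation argument: the empirical second-moment matrices concentrate around their population versions at rate $\mathrm{poly}(d,r,l)/\sqrt{N}$ (matrix Bernstein / Hanson–Wright for the sub-Gaussian actions and the bounded-or-sub-Gaussian features), the population optimum is well-conditioned by general position, so the empirical minimum-norm solution is close to the true $V$ and the span/membership conclusions survive with polynomially many samples.

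The main obstacle I anticipate is the second step — rigorously showing that the minimum-norm optimum must zero out the $V^\perp$ blocks. This requires a careful, quantitative version of "$z_i$ is not linearly correlated with $y_t$": one must ensure that no linear combination of $z_1, \dots, z_r$ can reduce the residual, which means controlling the conditional covariance of the $z_i$'s given the $y$'s and the actions, and checking that the only way to exploit them is to inflate the parameter norm. Pinning down the exact genericity/nonlinearity condition that makes this work — strong enough to give the clean conclusion but weak enough to be believable — is the delicate part; the rest is linear algebra and standard concentration.
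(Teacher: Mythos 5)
Your overall route is the same as the paper's: write the normal equations, decompose the optimality conditions along $V$ and $V^\perp$, use the no-linear-correlation hypothesis to kill the $V^\perp$ blocks, invoke minimal norm to resolve the remaining ambiguity ($PB=I$ has many solutions; $B^+$ is the minimal-norm one), and use controllability of $(A^\top,(B^+)^\top)$ to show the spans exhaust $V$. However, the specific mechanism you propose for the key step is flawed. You say the objective ``splits'' into a $V$-part and a $V^\perp$-part because ``the relevant cross-covariances vanish or be generic.'' They do not vanish: $z_i = V^\perp g(h_i)$ is a (possibly deterministic) function of $h_i$, so $\Sigma_{z_i h_i}$ is in general large, and the objective does not decompose additively. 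What actually saves the argument is that the \emph{canonical correlation} between $(z_i,z_0)$ and $(h_i,h_0)$ is bounded by $\rho<1$ (Assumption~\ref{assume:nonlinear}); the paper eliminates the $V$-block equation from the $V^\perp$-block equation to isolate $\theta_i K\Sigma_{\tilde z_i\tilde z_i}\bigl(I-\Sigma_{\tilde z_i\tilde z_i}^+\Sigma_{\tilde z_i\tilde h_i}\Sigma_{\tilde h_i\tilde h_i}^+\Sigma_{\tilde h_i\tilde z_i}\bigr)=0$ and uses $\rho<1$ to conclude the second factor is nonsingular, forcing $\theta_i K\Sigma_{\tilde z_i\tilde z_i}=0$ at \emph{every} optimum (minimal norm is only needed afterward, to zero out directions of $V^\perp$ not spanned by the $z$'s and to pick $P=B^+$ among all $P$ with $PB=I$). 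You also need a step you did not anticipate: the assumption bounds the correlation of $(z_i,z_0)$ with $(h_i,h_0)$ only, whereas the regression conditions on the past actions $u_0,\dots,u_{i-2}$ as well, so one must show the canonical correlation does not increase under this augmentation (Lemma~\ref{lem:bounded_correlation}).

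Your finite-sample plan also diverges in a way that weakens the conclusion. A concentration-plus-perturbation argument on the second-moment matrices would only show the empirical minimizer is \emph{close} to the population one, whereas the theorem claims exact recovery with polynomially many samples. The paper gets exact recovery because the dynamics are deterministic: the true parameters achieve exactly zero empirical loss on every sample, so the empirical problem has the same structure as the population one, and the identical first-order argument goes through verbatim once (i) the empirical canonical correlation $\rho(\tilde H_i,\tilde Z_i)$ concentrates to within $(1-\rho)/2$ of its population value (so the key matrix stays nonsingular), and (ii) the empirical covariance of the Gaussian controls and initial state is nonsingular. Replacing your perturbation argument with this observation is necessary to prove the statement as written.
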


\subsection{Nonlinear State Representation}

To learn a nonlinear state representation mapping, we consider the setting where the observed state $x_t$ is first transformed by a feature mapping or kernel $\phi$ (e.g. polynomial or RBF features, or a hidden layer of a neural network), and some subset or linear combination of these new features have linear dynamics.
Again we try to predict the actions given starting and ending states:
\begin{equation}
\label{eq:nonlinear}
\underset{\theta}{\min} \frac{1}{2}\mathbb{E} \sum_{i=1}^\tau \| P\phi(x_i) - L_i\phi(x_0) -\sum_{k=1}^{i-1}T_ku_{i-1-k} - u_{i-1}\|_2^2
\end{equation}
Here the parameter $\theta$ includes the nonlinear function $\phi$. Of course, since $\phi$ is nonlinear, the optimization problem now is nonconvex. 
Our result shows that optimizing this objective for large enough $\tau$ allows us to find a linear dynamics:

\begin{theorem}[informal] If the objective function \eqref{eq:nonlinear} is optimized to have 0 loss, and the column span of $L_\tau^\top$ is a subspace of the column spans of previous $L_1^\top,L_2^\top,...,L_{\tau-1}^\top$, then one can extract a matrix $Q$ such that $Q\phi(x)$ satisfies a linear dynamics.
\end{theorem}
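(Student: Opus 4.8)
The plan is to read a companion-form linear system directly off the zero-loss equations, using as $Q$ the vertical stacking $Q := [L_1;\, L_2;\, \cdots;\, L_{\tau-1}]$. Since the objective \eqref{eq:nonlinear} is an expectation of a sum of squared norms, a value of $0$ forces every summand to vanish almost surely; writing $\psi_t := \phi(x_t)$, this says that for each $i = 1,\dots,\tau$ we have $P\psi_i = L_i\psi_0 + u_{i-1} + \sum_{k=1}^{i-1}T_k u_{i-1-k}$ almost surely. Because the dynamics $f$ is time-homogeneous, the same identity must hold along every sub-trajectory, so that for all $s\ge 0$ and $1\le m\le\tau$,
\[
P\psi_{s+m} \;=\; L_m\psi_s \;+\; u_{s+m-1} \;+\; \sum_{k=1}^{m-1} T_k u_{s+m-1-k}.
\]

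Next I would eliminate the $P\psi$ terms by subtracting two instances of the last display that share the same left-hand side: taking $(s,m)=(t+1,i)$ and $(s,m)=(t,i+1)$, the $P\psi_{t+i+1}$ terms cancel and the $u$-terms collapse to the single term $T_i u_t$, leaving, for $i=1,\dots,\tau-1$ and all $t\ge 0$,
\[
L_i\psi_{t+1} \;=\; L_{i+1}\psi_t \;+\; T_i u_t .
\]
For $i\le\tau-2$ the right-hand side is already a linear function of the blocks of $Q\psi_t$ and of $u_t$. For $i=\tau-1$ it involves $L_\tau\psi_t$; here I invoke the hypothesis that the row space of $L_\tau$ lies in the span of the row spaces of $L_1,\dots,L_{\tau-1}$, i.e. $L_\tau=\sum_{j=1}^{\tau-1}C_jL_j$ for some matrices $C_j$, so that $L_{\tau-1}\psi_{t+1}=\sum_{j=1}^{\tau-1}C_j(L_j\psi_t)+T_{\tau-1}u_t$.

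Stacking these $\tau-1$ identities block-wise with $Q=[L_1;\,L_2;\,\cdots;\,L_{\tau-1}]$ gives exactly a linear dynamics $Q\psi_{t+1}=M\,Q\psi_t+N u_t$, where
\[
M=\begin{pmatrix} 0 & I & 0 & \cdots & 0 \\ 0 & 0 & I & \cdots & 0 \\ \vdots & & & \ddots & \vdots \\ 0 & 0 & \cdots & 0 & I \\ C_1 & C_2 & \cdots & C_{\tau-2} & C_{\tau-1} \end{pmatrix}, \qquad N=\begin{pmatrix} T_1 \\ T_2 \\ \vdots \\ T_{\tau-1} \end{pmatrix}.
\]
Passing from $Q$ to a matrix $\widetilde Q$ whose rows form a basis of the row space of $Q$ (so $Q=F\widetilde Q$ with $F$ of full column rank) yields a reduced representation with conjugated dynamics $\widetilde Q\psi_{t+1}=(F^+MF)\widetilde Q\psi_t+(F^+N)u_t$; its non-triviality is inherited from non-vanishing of the $L_i$, and in the hidden subspace model one checks that $\widetilde Q\psi_t$ is then an invertible linear image of the latent state $h_t$.

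The one genuinely delicate point is the passage from the zero-loss identities — which are asserted only for trajectories drawn from the training distribution of $x_0$ — to the shifted identities at base time $s>0$ used in the subtraction step. I expect to handle this either by assuming the zero-loss identity holds pointwise on the support of the reachable-state distribution, so that time-homogeneity of $f$ propagates it to all $s$, or by collecting rollouts of length $2\tau$ and applying the loss to every length-$\tau$ window, which puts the shifted identities into the hypothesis directly. Everything after that — the cancellation in the subtraction step and the companion-matrix assembly — is routine linear algebra.
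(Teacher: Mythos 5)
Your proposal is essentially correct and its core step is the same as the paper's: both derive the telescoping identity $L_i\phi(x_{t+1}) = L_{i+1}\phi(x_t) + T_i u_t$ by equating two zero-loss expressions for the same quantity (the paper writes it as $L_{i-1}f(x,u_0) = L_i x + T_{i-1}u_0$), and both rely crucially on the $T_k$ being shared across the trajectory lengths so that the action terms cancel. Where you differ is only in the final packaging: the paper takes $Q$ to be the orthogonal projection onto $V = \mathrm{col}(P^\top) + \sum_i \mathrm{col}(L_i^\top)$ and expands $Q$ in an orthonormal basis of $V$ to produce a single square matrix $A$ acting on the ambient representation space, whereas you stack $Q = [L_1;\cdots;L_{\tau-1}]$ and read off an explicit companion-form $(M,N)$; the two representations are related by an invertible linear change of coordinates on the row space (up to the inclusion of $\mathrm{col}(P^\top)$, which the paper's formal statement keeps but the informal statement you were given omits), so either is a valid instantiation of the claim. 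Your flagged ``delicate point'' about propagating the zero-loss identity to shifted base times is real but is exactly what the paper also assumes implicitly: its proof asserts the identity ``for all $x \in \phi(\mathbb{R}^d)$,'' i.e.\ pointwise on the image of $\phi$ rather than merely almost surely under the training distribution, which subsumes all reachable states; your proposed fixes are the standard ways to justify this, so there is no gap beyond what the paper itself leaves informal.
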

\section{Learning the Hidden Subspace Model}
\label{sec:model_detail}
In this section, we discuss our learning algorithm based on Equation~\ref{eq:linear_relaxation} in more detail. 
We show that this approach efficiently learns the linear state representation in our hidden subspace model when certain assumptions are satisfied.
We also study the sample complexity of this problem when we only have i.i.d. samples from the model. 
In the supplementary material we study a simplified version of the model where there is noise.

We first establish some notation and definitions.
If $a_1, \ldots, a_k$ are vectors, we let $(a_1, \ldots, a_k)$ denote the concatenation of these vectors.
For random vectors $a$ and $b$, let $\Sigma_{ab}$ denote the matrix $\E[ab^\top]$.
Let $\rho_{ab}$ denote the canonical correlation between $a$ and $b$, i.e.
\[
\rho(a,b) = \underset{a',b'}{\max} \frac{\E[\langle a,a'\rangle\langle b,b'\rangle]}{\sqrt{\E[\langle a, a'\rangle^2]\E[\langle b, b'\rangle^2]}}.
\]
For a matrix $A$, let $A^+$ denote its Moore-Penrose pseudoinverse, and let $\text{col}(A)$ denote the column-space of $A$. 
Let $\sigma_{min}(A)$ denote the smallest nonzero singular value of $A$.

We use the symbols $x_t, y_t, z_t$ to denote state vectors in $\mathbb{R}^d$ whose values change depending on the time index $t \in \mathbb{N}$.
We use the symbols $u_t$ to denote control input vectors in $\mathbb{R}^l$, where $l < d$.
In the context of our hidden subspace model, we let $V$ denote both the latent subspace itself, as well as a $d\times r$ matrix whose columns form a basis for the subspace. We refer to this subspace as the \emph{linearizing subspace}.

We adapt the standard notion of controllability from control systems theory to be subspace dependent.
\begin{definition}
Given matrices $A \in \mathbb{R}^{d\times d}$, $B \in \mathbb{R}^{d\times l}$, and an $r$-dimensional subspace $V \subset \mathbb{R}^d$, we say that the tuple $(A, B)$ is $V$-\emph{controllable} if $\text{col}(A), \text{col}(A^\top), \text{col}(B) \subset V$
and the $d\times rl$ matrix
\[
\begin{bmatrix}
B & AB & \cdots & A^{r-1}B
\end{bmatrix}
\]
has rank $r$.
\end{definition}

\subsection{Learning via Inverse Dynamics}
We now further motivate the optimization problem given in Equation~\eqref{eq:linear_relaxation}.
Since the dynamics on $V$ are linear and the process $z_t$ has a nonlinear dependence on the hidden state, it is reasonable to identify a linear relationship between the controls and the observed states -- concretely, we predict the most recent action as a linear function of past observed states and actions.
This problem is a form of \emph{learning an inverse model}, which is a well-established approach in the state representation learning literature as noted in Section~\ref{sec:related}.

Consider one step of the latent linear control system.
We have
\[
x_1 = y_1 + z_1= Ay_0 + Bu_0 + z_1= Ax_0 + Bu_0 + z_1.
\]
If $B$ has full column rank, then we have $B^+B = I$ and $B^+z_1 = 0$ since the rows of $B^+$ are in $V$.
Hence,  we can solve for the action $u_0$ given $x_0$ and $x_1$ as 
$u_0 = B^+x_1 - B^+Ax_0.$
This expression suggests that if we fit a linear model to predict $u_0$ given $x_0$ and $x_1$, the solution may allow us to recover
$B^+$ and $B^+A$, both of which reveal part of the latent subspace $V$.
Advancing the system up to timestep $i$, we have a similar relationship:
\[u_{i-1} = B^+x_i - B^+A^ix_0 - \sum_{k=1}^{i-1}B^+A^kBu_{i-1-k}.\]
Once again, if we fit a linear model to predict $u_{i-1}$ from $x_i, x_0, u_0,\ldots,u_{i-2}$, then we ought to be able to recover more of $V$.

Trying to solve for $A$ and $B$ directly by minimizing a squared error loss based on the above expression is problematic, given the presence of high powers of $A$ and products between $A$, $B$, and $B^+$. 
The optimization landscape corresponding to such an objective function is non-convex and ill-conditioned.
To circumvent this issue, we propose the \emph{convex relaxation}:
\[
u_{i-1} = Px_i - L_ix_0 -\sum_{k=1}^{i-1}T_ku_{i-1-k}
\]
Here, $P$ corresponds to $B^+$, $L_i$ to $B^+A^i$, and $T_k$ to $B^+A^kB$.
We arrive at \eqref{eq:linear_relaxation} by considering this inverse model over a trajectory of length $r$, which is chosen so that we can recover the entirety of $V$.

We emphasize that our learning objective rules out trivial representations.
The function that maps everything to $0$ is technicaly a linearizing representation, but this solution is obviously undesirable.
By learning an inverse model, we are requiring our state representations to retain enough information to recover the actions.
In this sense they encode the core dynamics that govern how the control inputs affect the state, thereby enabling policy learning based on the state representations.

For this approach to work, we need a few assumptions.
\begin{assumption}[No Linear Dependence] Let $h_0$ and $u_0, \ldots, u_{i-1}$ be independent standard Gaussian vectors. There is a constant $0 \leq \rho < 1$ such that for each $i = 1,\ldots, r$, $\rho((z_i,z_0),(h_i,h_0)) \leq \rho$.
\label{assume:nonlinear}
\end{assumption}
We need assumption \ref{assume:nonlinear} to preclude any linear dependence between $z_i$ and the controls, otherwise the linear 
model that we learn may use information from $V^\perp$ to predict the controls. 

\begin{assumption}[Controllability]
\label{assume:controllability}
The tuple $(A^\top,(B^+)^\top)$ is $V$-controllable.
\end{assumption} 
Assumption \ref{assume:controllability} is related to the standard controllability assumption for linear control systems.
Instead of assuming $(A,B)$ controllability, we need the property to hold for $(A^\top, (B^+)^\top)$ since we are learning an inverse model which yields the matrices $B^+, B^+A, \ldots, B^+A^r$. This property holds when $A$ and $B$ are in general position.

\begin{assumption}[Non-degeneracy]
\label{assume:fullrank}
The matrix $B$ has linearly independent columns, i.e. $\text{rank}(B)=l$.
\end{assumption}
Assumption \ref{assume:fullrank} allows us to learn the inverse model. 
If $B$ is rank-deficient, we could not hope to predict even $u_0$ from $x_0$ and $x_1$, since it is non-identifiable. 
One interpretation of this assumption is that the control inputs $u_i$ are well-specified, i.e. not redundant.
The assumption holds if $B$ is in general position.

After the convex relaxation, the matrices $P$, $L_i$ and $T_i$ may not have a unique solution, as there are now some redundancies in the parametrization and in general the solution to a linear system can be a linear subspace. We show that one can still recover the intended solutions $B^+$, $B^+A_i$ and $B^+A^kB$ by imposing some norm preferences. We can now state the theoretical guarantee for our algorithm. 
\begin{theorem}
\label{thm:linear_regression_problem}
Let $f$ be the objection function in \eqref{eq:linear_relaxation}, and 
let $\Theta^*_0 = \{\theta = (P,\{L_i\}_{i=1}^r,\{T_i\}_{i=1}^{r-1}) \in f^{-1}(0)\,|\, \|P\|_F \text{ is minimal}\}$ be the set of optimal solutions to \eqref{eq:linear_relaxation} that have minimal norm for $P$. Let  $\theta^* = (P^*, \{L_i^*\}, \{T_i^*\}) \in \Theta^*_0$ be the solution in this set that minimizes $\sum_{i=1}^r \|L_i\|_F^2$. 
Then under  assumptions \ref{assume:nonlinear}, \ref{assume:controllability}, and \ref{assume:fullrank}, $P = B^+$ and $L_i = B^+A^i$ for $i = 1, \ldots, r$. 
Moreover, $V =  \text{col}(P^\top) + \text{col}(L_1^\top) + \cdots + \text{col}(L_r^\top)$.
\end{theorem}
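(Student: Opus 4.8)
The plan is to (1) exhibit an explicit element of $f^{-1}(0)$, (2) show that \emph{every} $\theta\in f^{-1}(0)$ is forced to satisfy $PB=I$ together with the linear identities $(PA^i-L_i)V=0$ and $PA^kB=T_k$, and (3) run the two nested norm minimizations to pin down $P=B^+$ and $L_i=B^+A^i$ and then read off $V$. For step (1), I would expand the $i$-th prediction error with the model. Because $\mathrm{col}(A),\mathrm{col}(A^\top)\subseteq V$ we have $Az_t=0$, and because $B=V\bar B$ has full column rank (Assumption~\ref{assume:fullrank}) its pseudoinverse obeys $B^+B=I$ and has rows in $V$, so $B^+z_t=0$. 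Substituting $x_i=A^iy_0+\sum_{k=0}^{i-1}A^kBu_{i-1-k}+z_i$ with $y_0=Vh_0$ into \eqref{eq:linear_relaxation}, the $i$-th summand splits as $e_i=G_i+Z_i$, where
\[
G_i=(PA^i-L_i)y_0+(PB-I)u_{i-1}+\sum_{k=1}^{i-1}(PA^kB-T_k)u_{i-1-k}
\]
is a deterministic linear function of the Gaussian inputs $(h_0,u_0,\dots,u_{i-1})$ and $Z_i=Pz_i-L_iz_0$ is a linear function of $(z_i,z_0)$. Taking $P=B^+$, $L_i=B^+A^i$, $T_k=B^+A^kB$ kills both $G_i$ and $Z_i$ (using $B^+z_i=0$, $A^iz_0=0$), so $f^{-1}(0)$ and hence $\Theta_0^*$ are nonempty and contain this $\theta$.

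Step (2) is the heart of the argument: for any $\theta\in f^{-1}(0)$ I claim $G_i\equiv Z_i\equiv 0$ a.s.\ for all $i$. Zero loss forces $G_i=-Z_i$ a.s. Since $h_i=\bar A^ih_0+\sum_{k=0}^{i-1}\bar A^k\bar Bu_{i-1-k}$ is linear in the inputs, $G_i$ and $(h_i,h_0)$ are jointly Gaussian, so I can decompose (coordinatewise) $G_i=\Pi G_i+G_i^\perp$ with $\Pi$ the $L^2$-projection onto linear functionals of $(h_i,h_0)$. Then $G_i^\perp$ is uncorrelated with, hence (by joint Gaussianity) independent of, $(h_i,h_0)$; realizing the conditional law $z_t\mid h_t$ via exogenous noise independent of the inputs shows $(z_i,z_0)$ is a function of $(h_i,h_0)$ and that noise, so $G_i^\perp$ is also independent of $(z_i,z_0)$, giving $\mathrm{Cov}(Z_i,G_i^\perp)=0$. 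From $G_i=-Z_i$ this yields $\mathrm{Var}(G_i^\perp)=\mathrm{Cov}(G_i,G_i^\perp)=0$, so $G_i^\perp=0$ a.s.\ and $G_i$ is a linear functional of $(h_i,h_0)$. Then $Z_i=-G_i$ is simultaneously a linear functional of $(z_i,z_0)$ and of $(h_i,h_0)$; if any coordinate of $Z_i$ had positive variance, the canonical correlation $\rho((z_i,z_0),(h_i,h_0))$ would equal $1$, contradicting Assumption~\ref{assume:nonlinear}. Hence $Z_i=0$, so $G_i=0$, a.s.

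For step (3), since $h_0,u_0,\dots,u_{i-1}$ are independent non-degenerate Gaussians and $y_0=Vh_0$, $G_i=0$ a.s.\ forces each coefficient block to vanish: $(PA^i-L_i)V=0$, $PB=I$, and $PA^kB=T_k$ for $1\le k\le i-1$; over $i=1,\dots,r$ this gives $PB=I$ for every zero-loss $\theta$. Every left inverse of $B$ equals $B^++N$ with the rows of $N$ orthogonal to $\mathrm{col}(B)$ and $\|B^++N\|_F^2=\|B^+\|_F^2+\|N\|_F^2$, so minimizing $\|P\|_F$ over $f^{-1}(0)$ forces $P=B^+$, i.e.\ $\Theta_0^*=\{\theta\in f^{-1}(0):P=B^+\}$. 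On $\Theta_0^*$ we have $L_i=B^+A^i+M_i$ with $M_iV=0$; since $B^+A^i$ vanishes on $\mathrm{col}(V)^\perp$ (as $\mathrm{col}(A^\top)\subseteq V$) while $M_i$ vanishes on $\mathrm{col}(V)$, they are Frobenius-orthogonal, so $\|L_i\|_F^2=\|B^+A^i\|_F^2+\|M_i\|_F^2$, and since $M_i=0$ is feasible (complete with $T_k=B^+A^kB$), minimizing $\sum_i\|L_i\|_F^2$ over $\Theta_0^*$ gives $L_i=B^+A^i$. Finally $\mathrm{col}(P^\top)=\mathrm{col}(B)$ and $\mathrm{col}(L_i^\top)=\mathrm{col}((A^\top)^i(B^+)^\top)$, so by Cayley--Hamilton for the restriction of $A$ to the $r$-dimensional subspace $V$, $\sum_{i=0}^r\mathrm{col}((A^\top)^i(B^+)^\top)$ equals the column space of $\bigl[(B^+)^\top\ A^\top(B^+)^\top\ \cdots\ (A^\top)^{r-1}(B^+)^\top\bigr]$, which by the $V$-controllability of $(A^\top,(B^+)^\top)$ (Assumption~\ref{assume:controllability}) is an $r$-dimensional subspace of $V$, hence equals $V$.

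The main obstacle is step (2): $G_i$ mixes all of $u_0,\dots,u_{i-1}$, which are not recoverable from $(h_i,h_0)$, so Assumption~\ref{assume:nonlinear} cannot be applied to $G_i$ directly. The fix is the projection argument showing that the component of $G_i$ orthogonal to $\mathrm{span}(h_i,h_0)$ must vanish, after which the zero-loss constraint cleanly separates into a Gaussian identity ($G_i=0$) and a nonlinear identity ($Z_i=0$). The remaining work there — verifying that orthogonality implies independence for the relevant jointly Gaussian vectors, and modeling $z_t\mid h_t$ by exogenous noise — is routine; one should also note the harmless point that the canonical-correlation formula in Assumption~\ref{assume:nonlinear} is being used against a centered Gaussian functional, so possible non-centering of $z_t$ is irrelevant.
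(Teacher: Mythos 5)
Your proof is correct, and it arrives at the same linear identities ($PB=I$, $(PA^i-L_i)V=0$, $T_k=PA^kB$) and the same two-stage norm-minimization endgame as the paper, but the central step is argued by a genuinely different route. The paper works with first-order stationarity of the population objective: it writes $\nabla f_i(\theta_i)=0$ in terms of the second-moment matrices of $\tilde h_i=(h_i,h_0,u_0,\dots,u_{i-2})$ and $\tilde z_i=(z_i,z_0)$, splits the condition along $V$ and $V^\perp$, and deduces $\theta_iK\Sigma_{\tilde z_i\tilde z_i}=0$ from the nonsingularity of $I-\Sigma_{\tilde z_i\tilde z_i}^+\Sigma_{\tilde z_i\tilde h_i}\Sigma_{\tilde h_i\tilde h_i}^+\Sigma_{\tilde h_i\tilde z_i}$; the fact that the canonical correlation of the \emph{augmented} vector $\tilde h_i$ with $\tilde z_i$ is still at most $\rho$ is isolated as Lemma~\ref{lem:bounded_correlation}. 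You instead exploit the almost-sure identity $G_i=-Z_i$ implied by zero loss, and use an $L^2$-projection plus Gaussian-independence argument to kill the component of $G_i$ orthogonal to linear functionals of $(h_i,h_0)$; this plays exactly the role of Lemma~\ref{lem:bounded_correlation} (both hinge on $u$ being independent of $z_0$ and conditionally independent of $z_i$ given $h_i$), after which Assumption~\ref{assume:nonlinear} is applied directly to the residual identity rather than to a covariance operator. Your route is arguably cleaner for the exact-zero-loss population statement, and you also make explicit the feasibility check that $f^{-1}(0)\neq\emptyset$, which the paper only asserts; the paper's covariance/stationarity formulation has the compensating advantage that it transfers almost verbatim to the finite-sample and noisy versions (Theorems~\ref{thm:sample_complexity} and~\ref{thm:noisy_regression}), where the loss is no longer exactly zero and an almost-sure identity is unavailable. (Your closing appeal to Cayley--Hamilton for the $i=r$ term is harmless but unnecessary: the span for $i=0,\dots,r-1$ is already an $r$-dimensional subspace of $V$ by Assumption~\ref{assume:controllability}, and the $i=r$ term is contained in $V$.)
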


To find the desirable solution, we can first find the set $\Theta^*$ for which the objective function is equal to $0$. As we discussed for such linear systems $\Theta^*$ is a subspace, $\Theta^*_0$ can then be obtained by optimizing for the norm of $P$ within this subspace. In supplementary material we also discuss how to find such a solution more efficiently. 

Intuitively, Theorem~\ref{thm:linear_regression_problem} is correct because by Assumption~\ref{assume:nonlinear}, any direction in the orthogonal subspace of $V$ will not have a perfect linear correlation with the signal $u_i$ that we are trying to predict. This does not mean that every optimal solution to Equation \eqref{eq:linear_relaxation} has components  only in $V$ - it is still possible that components in $V^\perp$ cancel each other. However, if any of the matrices $P, L_i, T_k$ have components in $V^\perp$, removing those components will reduce the norm of the matrices while not changing the predictive accuracy. Therefore the minimum norm solution must lie in the correct subspace. Finally, the fact that we recover the entire space relies on Assumption~\ref{assume:controllability}. The detailed proof is deferred to supplementary material.
In the supplementary material we also adapt this result to the case where there is noise in the linear dynamics.

\subsection{Sample Complexity}
We can't solve \eqref{eq:linear_relaxation} in practice since we only have access to finitely many samples of the system.
However, given enough samples, solving the empirical version of the problem allows us to robustly recover the model parameters with high probability. 
We consider the finite sample problem 
\begin{equation}
\label{eq:empirical_problem}
\underset{\theta}{\min} \frac{1}{2n}\sum_{i=1}^r \| PX_i - L_iX_0 -\sum_{k=1}^{i-1}T_kU_{i-1-k} - U_{i-1}\|_F^2
\end{equation}
Here, the columns of $X_i \in \mathbb{R}^{d\times n}, U_i \in \mathbb{R}^{l\times n}$ are i.i.d. copies of $x_i$ and $u_i$, respectively.

We introduce the following assumption that allows us to utilize quantitative concentration results, and then state the sample complexity result.
\begin{assumption}[Sub-Gaussianity]
\label{assume:subgaussian} 
There exists a constant $C > 0$ such that 
for each $i \in \{1,\ldots, r\}$, $P(|\langle q, \Sigma_{\xi_i\xi_i}^{-1/2}\xi_i\rangle| > t) \leq \exp(-Ct^2)$ for any unit vector $q$, where we
define $\xi_i := (z_i, z_0, h_i, h_0)$.
\end{assumption}

\begin{theorem}
\label{thm:sample_complexity}
Let $f$ be the objection function in \eqref{eq:empirical_problem}, and 
let $\Theta^*_0 = \{\theta = (P,\{L_i\}_{i=1}^r,\{T_i\}_{i=1}^{r-1}) \in f^{-1}(0)\,|\, \|P\|_F \text{ is minimal}\}$ be the set of optimal solutions to \eqref{eq:linear_relaxation} that have minimal norm for $P$. Let  $\theta^* = (P^*, \{L_i^*\}, \{T_i^*\}) \in \Theta^*_0$ be the solution in this set that minimizes $\sum_{i=1}^r \|L_i\|_F^2$. 
Under assumptions \ref{assume:nonlinear}, \ref{assume:controllability}, \ref{assume:fullrank}, and \ref{assume:subgaussian}, there exists a constant $C_0$ such that if $n \geq C_0(d+rl)\log r\log^2(d+rl)/(1-\rho)^2$, then with probability at least $0.99$,  $P = B^+$ and $L_i = B^+A^i$ for $i = 1, \ldots, r$. 
\end{theorem}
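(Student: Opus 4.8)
The plan is to reduce Theorem~\ref{thm:sample_complexity} to Theorem~\ref{thm:linear_regression_problem} via a perturbation argument. First I would observe that the empirical objective \eqref{eq:empirical_problem} has the same structure as \eqref{eq:linear_relaxation} but with the population second-moment matrices replaced by their empirical counterparts. The solution set $f^{-1}(0)$ of a least-squares problem $\min_\theta \|M\theta - c\|^2$ is an affine subspace determined by the Gram matrix $\Sigma$ of the covariates and the cross-covariance with the target; the minimum-$\|P\|_F$-then-minimum-$\sum\|L_i\|_F^2$ selection picks out a unique point as a continuous (indeed locally Lipschitz, on the relevant stratum) function of these moments. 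So the strategy is: (i) show the empirical moment matrices concentrate around the population ones; (ii) argue that the relevant subspace/rank structure is preserved under small perturbations; (iii) conclude that the selected empirical solution equals the selected population solution, which Theorem~\ref{thm:linear_regression_problem} identifies as $(B^+, \{B^+A^i\}, \{B^+A^kB\})$.

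For step (i), the covariates across all $r$ equations are linear functions of $(x_0,\dots,x_r,u_0,\dots,u_{r-1})$, hence of $(h_0,\dots,h_r,z_0,\dots,z_r,u_0,\dots,u_{r-1})$, and since the $h_t$ are linear in $(h_0, u_{0:t-1})$, everything is a fixed linear image of the vector $\xi = (z_{0:r}, h_0, u_{0:r-1})$ (or the per-step $\xi_i$ of Assumption~\ref{assume:subgaussian}). Assumption~\ref{assume:subgaussian} makes the whitened $\xi_i$ sub-Gaussian, so by a standard covariance-concentration bound (e.g. the matrix Bernstein / Vershynin-style bound for sub-Gaussian rows) the empirical covariance of the full covariate vector, which lives in dimension $O(d + rl)$, is within spectral-norm distance $\epsilon$ of the population covariance once $n \gtrsim (d+rl)/\epsilon^2$ up to log factors; the $\log r$ and $\log^2(d+rl)$ in the theorem come from union-bounding over the $r$ equations and from the sub-Gaussian tail integration. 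The $(1-\rho)^2$ in the denominator is the crucial quantitative ingredient: Assumption~\ref{assume:nonlinear} guarantees the population problem has a ``margin'' — the component of the $V^\perp$ directions that can be linearly explained by the target is bounded away from full correlation by $\rho$ — and this margin controls how large a perturbation the solution-selection map can tolerate before the $V^\perp$ components fail to be strictly suboptimal. So I would make the concentration accuracy $\epsilon$ a small multiple of $(1-\rho)$ and track that through.

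For step (ii)–(iii), the heart is a robustness version of the argument sketched after Theorem~\ref{thm:linear_regression_problem}. In the population case, the minimum-norm selection kills all $V^\perp$ components because removing them strictly decreases the Frobenius norm without hurting the loss; the strictness is what Assumption~\ref{assume:nonlinear} buys. I would quantify this: on the empirical problem, any candidate solution with nonzero $V^\perp$ component incurs either strictly larger loss (it cannot be in $f^{-1}(0)$ once $n$ is large enough that the empirical problem is still consistent and the relevant empirical Gram block is nonsingular) or strictly larger norm, with the gap lower-bounded in terms of $(1-\rho)$ minus the perturbation. Hence the empirical $\Theta_0^*$ still consists of solutions supported on $V$, and within that set the recovery of $P = B^+$, $L_i = B^+A^i$ reduces to the same linear-algebra identity as in the population proof — the $V$-controllability Assumption~\ref{assume:controllability} and full-rank Assumption~\ref{assume:fullrank} make those blocks identifiable exactly (these are exact rank/nullspace statements that hold verbatim once the empirical Gram matrix restricted to $V$ is nonsingular, which concentration gives).

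The main obstacle I anticipate is step (ii): making the ``removing $V^\perp$ components strictly helps'' argument robust to sampling error in a way that is uniform over the whole (unbounded) affine solution set, and correctly propagating the $(1-\rho)$ margin into a clean sample bound. A naive argument shows the \emph{population} minimizer is stable, but here we must rule out that the empirical minimizer \emph{jumps} to a different face of the feasible polytope; this requires showing the selection map is locally Lipschitz with a modulus controlled by $1/(1-\rho)$ and by the smallest nonzero singular values appearing in the population problem (the $\sigma_{min}$ quantities), and that $f^{-1}(0)$ is nonempty in the empirical problem with the same dimension as in the population problem (no spurious shrinking or growing of the null directions), which again follows from rank stability under $\epsilon \ll$ the relevant singular-value gaps. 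Handling the pseudoinverse $B^+$ and the fact that $\sigma_{min}$ denotes the smallest \emph{nonzero} singular value — i.e. the problem is genuinely rank-deficient — is the delicate bookkeeping that makes this more than a textbook least-squares perturbation bound.
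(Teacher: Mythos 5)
Your proposal correctly isolates the two quantitative ingredients the paper actually uses --- concentration of the relevant second moments at accuracy $O(1-\rho)$, and nonsingularity of the empirical Gram matrix of the Gaussian inputs --- but the top-level strategy you wrap them in (Lipschitz continuity of the minimum-norm selection map, ruling out that the empirical minimizer ``jumps'' to a different face) is not how the paper proceeds, and as stated it cannot deliver the theorem's conclusion. The theorem claims \emph{exact} recovery, $P = B^+$ and $L_i = B^+A^i$; a perturbation bound on a continuous selection map only gives $\|P - B^+\|$ small. The paper avoids this entirely by rederiving the first-order optimality conditions of Theorem~\ref{thm:linear_regression_problem} verbatim with sample covariances in place of population ones. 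Concentration is invoked only to certify two \emph{qualitative} facts: (a) the empirical canonical correlation $\rho(\tilde H_i,\tilde Z_i)$ stays strictly below $1$ (within $(1-\rho)/2$ of the population value, via a dedicated concentration result for empirical canonical correlations, Lemma~\ref{lem:empirical_corr}), so the matrix $I-\Sigma_{\tilde Z_i\tilde Z_i}^+\Sigma_{\tilde Z_i\tilde H_i}\Sigma_{\tilde H_i\tilde H_i}^+\Sigma_{\tilde H_i\tilde Z_i}$ is nonsingular and the $V^\perp$ components are killed exactly; and (b) $\sigma_{min}(\hat H_i)\geq 1/2$ (Lemma~\ref{lem:gaussian_concentration}), so the remaining block equations force $PB=I$ and $(PA^i-L_i)V=0$ exactly. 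Once these two events hold (Lemma~\ref{lem:sample_main} gives probability $0.99$ at the stated $n$), every subsequent step is an algebraic identity; no uniform control over the unbounded affine solution set and no stability of the argmin map is needed. The ``main obstacle'' you anticipate is thus an artifact of your framing rather than of the problem.

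A secondary gap is in your step (i): you propose generic sub-Gaussian covariance concentration, but the quantity that must be controlled is the empirical canonical correlation, i.e.\ the spectral norm of $\Sigma_{\tilde Z_i\tilde Z_i}^+\Sigma_{\tilde Z_i\tilde H_i}\Sigma_{\tilde H_i\tilde H_i}^+\Sigma_{\tilde H_i\tilde Z_i}$, which involves pseudoinverses of the possibly rank-deficient empirical covariance of the arbitrary process $z$. Passing from additive spectral-norm error on the covariances to additive error on the canonical correlation requires lower bounds on the nonzero singular values of $\Sigma_{\tilde Z_i\tilde Z_i}$ and care with rank stability --- you flag this as ``delicate bookkeeping'' but do not resolve it. The paper sidesteps it by citing a concentration result stated directly for canonical correlations of sub-Gaussian vectors (which is exactly what Assumption~\ref{assume:subgaussian} is formulated to enable), then union-bounds over the $r$ timesteps, which is where the $\log r$ factor arises. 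If you want to salvage your route, the fix is to drop the selection-map continuity argument, keep your own parenthetical observation that the final identities ``hold verbatim once the empirical Gram matrix restricted to $V$ is nonsingular,'' and apply the same exactness observation to the $V^\perp$-killing step.
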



\subsection{Non-linear State Representation Learning}
\label{sec:nonlinear}
In this section, we take the algorithm and insights derived from our hidden subspace model and extend them to the setting in which there are no latent linear dynamics in the original state observations. 
In this case, we need to consider learning a nonlinear state mapping $\phi$, such as a convolutional neural network if the state observations are images. 
Our goal is still to learn a representation that linearizes the relevant dynamics, and so we extend our inverse model in the straightforward way:
\begin{equation}
\label{eq:nonlinear_objective}
\underset{\theta}{\min} \frac{1}{2}\mathbb{E}\sum_{i=1}^\tau\| P\phi(x_i) - L_i\phi(x_0) -\sum_{k=1}^{i-1}T_ku_{i-1-k}-u_{i-1}\|_2^2
\end{equation}
Although in practice it is unlikely to achieve $0$ loss for this optimization problem, we still verify that this is a principled approach where zero loss implies that $\phi$ gives a nontrivial linearization.

\begin{theorem} \label{thm:nonlinear}
Let $\phi, P, \{L_i, T_i\}, i=1,\ldots,\tau$ be optimal solutions to the optimization problem \eqref{eq:nonlinear_objective}, and assume that these parameters incur zero loss.
Define $V =\text{col}(P^\top) + \text{col}(L_1^\top) + \cdots + \text{col}(L_{\tau-1}^\top)$, and assume that $\text{col}(L_{\tau}^\top) \subset V$.
Let $Q$ be the 
projection matrix onto $V$. 
Then there exist matrices $A \in \mathbb{R}^{n\times n}$ and $B \in \mathbb{R}^{n\times l}$ such that for each $x$ and $u$,
\[
Q\phi(f(x,u)) = A Q\phi(x) + Bu.
\]
\end{theorem}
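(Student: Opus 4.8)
The plan is to exploit the zero-loss hypothesis to obtain, for \emph{every} realization of the trajectory, an exact linear identity, and then to show that the subspace $V$ is forward-invariant under the one-step dynamics in feature space. Concretely, zero expected loss of a sum of squared norms of nonnegative terms forces each term to vanish almost surely, so for all $i = 1,\ldots,\tau$ and (almost) all trajectories $(x_0,u_0,\ldots,u_{i-1})$ we have the identity $u_{i-1} = P\phi(x_i) - L_i\phi(x_0) - \sum_{k=1}^{i-1} T_k u_{i-1-k}$, where $x_j = f(x_{j-1},u_{j-1})$. First I would write this identity at level $i$ and at level $i-1$, solving both for the partial sums involving $P\phi$ and $L$'s; subtracting, I obtain an expression of the form $P\phi(x_i) = L_i\phi(x_0) + (\text{linear in } \phi(x_{i-1}) \text{ and the }u\text{'s})$, which begins to look like a one-step linear recursion but still references $\phi(x_0)$ rather than only $\phi(x_{i-1})$.

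The key structural move is to compare the identity obtained by running the system for $i$ steps starting from $x_0$ with the identity obtained by running it for $i-1$ steps starting from $x_1 = f(x_0,u_0)$. Both must hold for all valid inputs, and both have $P\phi(x_i)$ on the left. Equating them and using that $u_0,\ldots,u_{\tau-1}$ and $x_0$ are free (range over a set with nonempty interior, so polynomial/affine identities in them hold identically), I can match coefficients. This should yield relations among $L_1,\ldots,L_\tau$, $P$, and the $T_k$'s — in particular something like $L_i \phi(x_0)$ being expressible via $L_{i-1}\phi(x_1)$ plus control terms, i.e. a recursion that lets me express $P\phi(x_i)$ in terms of $P\phi(x_{i-1})$, $u_{i-1}$, and correction terms lying in $\mathrm{col}(L_j^\top)$'s applied to $\phi$-vectors. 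The hypothesis $\mathrm{col}(L_\tau^\top)\subset V$ is exactly what closes the induction at the top level: it guarantees there is no "new" direction at step $\tau$ escaping $V=\mathrm{col}(P^\top)+\sum_{i<\tau}\mathrm{col}(L_i^\top)$, so that $Q$ (projection onto $V$) annihilates precisely the components that would otherwise obstruct a clean recursion.

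Having set up the bookkeeping, I would then project everything by $Q$. Since $Q$ is the orthogonal projection onto $V\supseteq \mathrm{col}(P^\top)$, we have $QP^\top = P^\top$, hence $PQ = P$, and similarly $L_i Q = L_i$ for $i\le \tau$ (using the inclusion for $i=\tau$). Therefore $P\phi(x) = PQ\phi(x)$ depends on $\phi(x)$ only through $Q\phi(x)$, and likewise for the $L_i$. I would define $A$ and $B$ by reading off the recursion: from the one-step relation derived above, $Q\phi(x_1) - (\text{something}) \in \ker$ considerations, I expect to be able to solve for matrices $A,B$ with $Q\phi(f(x,u)) = A\,Q\phi(x) + Bu$ — the existence of such $A,B$ amounts to showing the right-hand side of the recursion is a well-defined linear function of $Q\phi(x)$ and $u$, which follows once all terms have been pushed through $Q$ and all dependence on the "free" part of $\phi(x)$ outside $V$ has been shown to cancel.

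The main obstacle I anticipate is the middle step: carefully matching coefficients between the two identities (length-$i$ from $x_0$ versus length-$(i-1)$ from $x_1$) to extract the recursion relating the $L_i$'s, and in particular verifying that the terms which are \emph{not} controlled by the $\mathrm{col}(L_i^\top)\subset V$ hypothesis genuinely cancel rather than merely being small. One must be careful that $\phi(x_0)$, $\phi(x_1)$, etc. are nonlinear functions of the free variables, so "equating coefficients" requires either an argument that the relevant $\phi$-images span enough of $\mathbb{R}^n$, or a more careful argument working pointwise in $x$ and using the freedom in the $u$'s alone. Pinning down exactly which genericity/spanning condition on the range of $\phi$ is needed (and whether it is automatic from zero loss plus the controllability-type content implicit in $V$ having the stated dimension) is where the real work lies; the projection and the final construction of $A,B$ should then be essentially formal.
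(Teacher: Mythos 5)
Your proposal matches the paper's proof: zero loss gives pointwise identities, and the key step is exactly the one you identify --- equating the length-$i$ identity starting from $x_0$ with the length-$(i-1)$ identity starting from $f(x_0,u_0)$, which yields $L_{i-1}\phi(f(x,u_0)) = L_i\phi(x) + T_{i-1}u_0$; the hypothesis $\mathrm{col}(L_\tau^\top)\subset V$ together with writing an orthonormal basis of $V$ as combinations of rows of $P$ and the $L_i$ then assembles $A$ and $B$. The obstacle you anticipate (matching coefficients in the nonlinear quantities $\phi(x_j)$) does not actually arise: because the same matrices $T_k$ are coupled across trajectory lengths, the control terms for $k\ge 1$ cancel identically, pointwise in $(x,u_0,\ldots,u_{i-1})$, so no spanning or genericity condition on the range of $\phi$ is needed.
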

To get the final linearizing representation, we need $\phi$ followed by the projection $Q$.
This is a nontrivial linearization because, as before, it encodes the inverse dynamics, i.e. the control input can be predicted
given the initial and current state representations and previous control inputs.

Intuitively, as $\tau$ increases, by result of Theorem~\ref{thm:linear_regression_problem} we can expect that one learns a larger and larger subspace with linear dynamics. Theorem~\ref{thm:nonlinear} shows that as soon as the rank of this linear subspace stops increasing at a certain length $\tau$ of the trajectory, it already learns a linear dynamics. At a high level, using the fact that $L_{\tau}^\top$ is contained in the subspace learned in step $\tau-1$, we can formalize a linear relationship between states and the actions. While this linear relationship acts over a trajectory of length $\tau$, note that our objective contains trajectories of different lengths, and by combining them we can show that the dynamics is linear in the subspace $V$. The detailed proof is deferred to supplementary material. 

\section{Experiments}
\label{sec:experiments}
We are interested in the following questions: does our algorithm produce low-dimensional representations that admit simple and effective policies, and how many samples are needed?  We explore these questions by focusing on two standard continuous control tasks from OpenAI Gym \citep{brockman2016openai}: `Pendulum-v0' and `MountainCarContinuous-v0'. 
Our representation learning algorithm is built on PyTorch \citep{paszke2017automatic}, and our policy search algorithms use the Stable Baselines library \citep{stable-baselines}.
We follow the basic approach taken by \citet{lillicrap2015continuous} in working with pixel observations:
 modify the environments so that each action is repeated over three consecutive timesteps in the original environment, and concatenate the resultant observations. The full details of the experiments in this section -- as well as additional experimental results -- are found in the supplementary materials.



\paragraph{Representation and Policy Learning}
To train our state representations,
we optimize \eqref{eq:nonlinear_objective} where $\tau \in \{10,15\}$ and  $\phi$ is a simple neural network with two convolutional layers followed by one or two fully-connected layers with ReLu activations. We first train in a fully stochastic manner by drawing a new batch of samples from the environment at each step and continuing until convergence. This method is highly sample-inefficient but allows us to minimize the population loss. In settings where unlabeled environment samples are cheap but collecting reward labels is expensive, this approach can be reasonable. We also train representations on a smaller, fixed datasets to test whether high-quality representations can be learned with fewer samples. 

Recent works have observed that it is possible to learn effective linear policies for common locomotion control tasks based on the low-dimensional state observations \citep{mania2018simple,rajeswaran2017towards}. These works motivate the hypothesis that our low-dimensional state representations may admit
effective linear policies, in contrast to the highly nonlinear policies that are typically learned from direct pixel observations. 

After training our state representations, we use TRPO \citep{schulman2015trust}, a standard policy gradient method, to search over the space of linear policies.
We use the implementation provided by Stable Baselines with the default hyperparameters except for the learning rate, for which we try 7 different values (all performed similarly for both environments). We plot the learning curves of representative models found during training. 

\paragraph{Baselines}
Baseline 1 tests how efficiently standard RL algorithms can find effective policies directly from the raw pixel observations. We employ several standard RL algorithms implemented in Stable Baselines with default hyperparameters except for learning rate, which we tune.  
Many of these algorithms failed to find reasonable policies within the number of training steps provided, so we only report the best-performing results here.

For baseline 2 we train the same standard RL algorithms on the original, low-dimensional state variables of the environments using the tuned hyperparameters provided by RL Baselines Zoo \citep{rl-zoo}. Because these methods operate on the original low-dimensional states and use optimized hyperparameters, they provide a reasonable benchmark for good performance on the tasks. We again only display results for the top-performing algorithms. 

As a final strong baseline, we train Dreamer \citep{hafner2019dream}, which was recently shown to obtain excellent results on many visual control tasks. 
The learning curves for all approaches are shown in Figure \ref{fig:learning_curves}.

\begin{figure}[t]
\label{fig:learning_curves}
\begin{center}
\includegraphics[width=\textwidth]{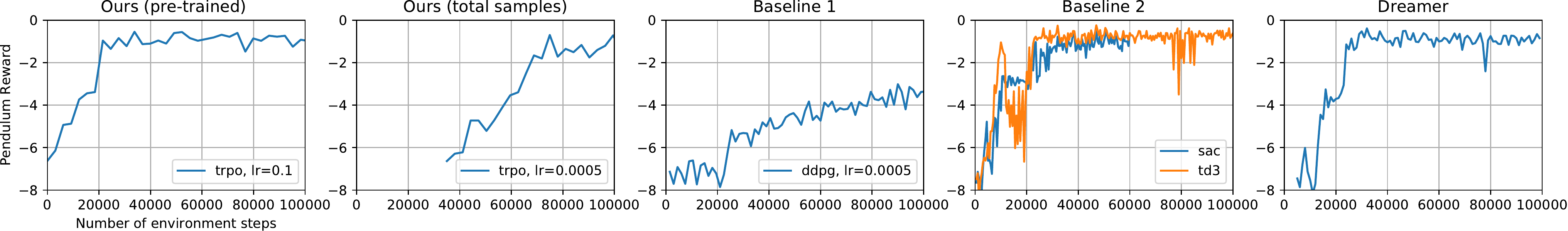}\\
\includegraphics[width=\textwidth]{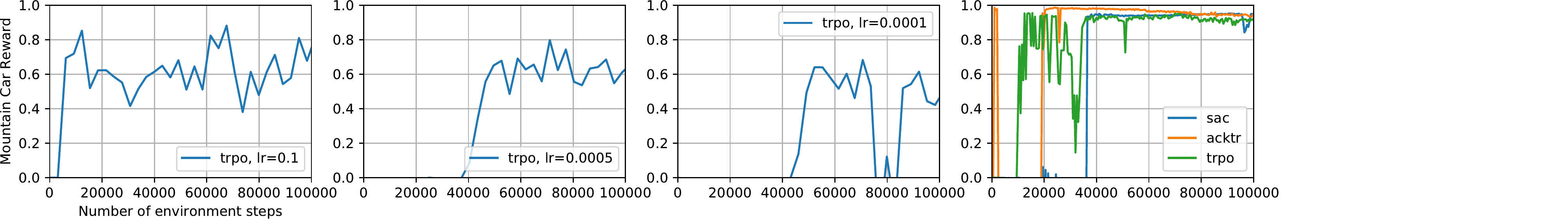}
\end{center}
\caption{Learning curves for `Pendulum-v0' (top) and `MountainCarContinuous-v0' (bottom).'}
\end{figure}

\paragraph{Results}
We first discuss the `Pendulum-v0' experiment. 
The leftmost plot (``pre-trained'') in Figure \ref{fig:learning_curves} shows the learning curve for TRPO \emph{after} training our representations on many  environment samples (greater than 100 thousand). Our learned representations, which trained only on the state dynamics with no knowledge of the reward function, allow TRPO to quickly find a good linear policy -- the performance is essentially the same as baseline 2 and Dreamer. Of course, the total sample complexity of this approach is much higher when we take into account the initial representation learning. 

The next plot (``total samples'') shows the learning curve for TRPO after training our representations on 35,000 samples, which is why the curve is shifted to the right. This plot indicates the overall sample-efficiency of our method -- it clearly outperforms baseline 1, but takes around 3 times as many samples as baseline 2 and Dreamer to reach similar performance. 


For the `MountainCarContinuous-v0' environment, positive reward indicates that the task is ``solved'', meaning the car has successfully reached the goal position at the top of the mountain. Once again the leftmost plot shows that our representations trained on a large amount of environment samples enable TRPO to quickly find a reasonable policy that guides the car to the goal position, albeit with less total reward than baseline 2.
The next plot shows the learning curve of TRPO after training our representations on 25,000 samples. In terms of overall sample efficiency and performance, our method is on par with baseline 1, although it obtains better reward performance. 
Dreamer failed to find good policies for this task (we did not attempt to tune hyperparameters).

We note that there is a reward gap between the high-dimensional learners (ours and baseline 1), and the low-dimensional baseline 2.  This may be partially explained by the fact that negative reward is given based on the magnitude of the actions taken. Since the high-dimensional agents are constrained to repeat each action three times they may be accumulated excessive negative reward. 

We conclude that our representations indeed enable effective linear control policies for these tasks, but there is room to further improve the total sample complexity of the representation learning and policy training.
Training low-dimensional state representations that encode the dynamics linearly via \eqref{eq:nonlinear_objective} can therefore be a reasonable approach for control with rich observations, particularly when unlabeled samples from the state dynamics are cheap.

\section{Conclusion and Future Work}
State representation learning is a promising way to bridge the complicated reinforcement learning problems and the simple linear models that have theoretical guarantees. In this paper we study a basic model for state representation learning and show that effective, low-dimensional state representations can be learned efficiently from rich observations. The algorithm inspired by our theory can indeed recover reasonable state representations for several tasks in OpenAI gym. There are still many open problems: the nonconvex objective \eqref{eq:nonlinear_objective} seems hard to optimize for the network architectures we tried; is there a way to design the architecture to make the nonconvex objective go to 0? The algorithm relies on a sample of initial state, which can be tricky for difficult problems;  can we complement our algorithm with an exploration strategy? Are there more realistic models for state representation learning that can also be learned efficiently? We hope our paper serves as a starting point towards these questions.

\bibliographystyle{plainnat}
\bibliography{staterep}
\newpage
\appendix
In this appendix, we first give proofs of the theoretical results stated in the main paper. Next, we discuss synthetic experiments that validate our theoretical results. Finally, we give the additional  details and experimental results to go along with Section \ref{sec:experiments} in the main paper.
\section{Deferred Proofs from Section~\ref{sec:model_detail}}
In this section we re-state and prove the theorems in the main paper. 
\subsection{Learning the Hidden Subspace Model}

We first give a proof of Theorem~\ref{thm:linear_regression_problem}, which shows that our objective function can recover the unknown subspace $V$. The theorem is restated below.

\begin{theorem}
Let $f$ be the objection function in \eqref{eq:linear_relaxation}, and 
let $\Theta^*_0 = \{\theta = (P,\{L_i\}_{i=1}^r,\{T_i\}_{i=1}^{r-1}) \in f^{-1}(0)\,|\, \|P\|_F \text{ is minimal}\}$ be the set of optimal solutions to \eqref{eq:linear_relaxation} that have minimal norm for $P$. Let  $\theta^* = (P^*, \{L_i^*\}, \{T_i^*\}) \in \Theta^*_0$ be the solution in this set that minimizes $\sum_{i=1}^r \|L_i\|_F^2$. 
Then under  assumptions \ref{assume:nonlinear}, \ref{assume:controllability}, and \ref{assume:fullrank}, $P = B^+$ and $L_i = B^+A^i$ for $i = 1, \ldots, r$. 
Moreover, $V =  \text{col}(P^\top) + \text{col}(L_1^\top) + \cdots + \text{col}(L_r^\top)$.
\end{theorem}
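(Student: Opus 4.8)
The plan is to exploit that the zero-loss set $f^{-1}(0)$ is a nonempty affine subspace and to identify its minimum-norm point explicitly. Nonemptiness is immediate: the derivation in Section~\ref{sec:model_detail} shows that the ``ground truth'' $\theta_{\mathrm{GT}}=(B^+,\{B^+A^i\}_{i=1}^r,\{B^+A^kB\}_{k=1}^{r-1})$ attains loss $0$, the key point being that the rows of $B^+$ and of each $B^+A^i$ lie in $V$ and hence annihilate the $V^\perp$ components $z_i,z_0$. I would work throughout in latent coordinates, writing $x_t=Vh_t+V^\perp g(h_t)$ with $V,V^\perp$ taken (without loss of generality) to have orthonormal columns, $h_i=\bar A^ih_0+\sum_{j=0}^{i-1}\bar A^j\bar Bu_{i-1-j}$, and $A^i=V\bar A^iV^\top$, $B^+V=\bar B^+$; and I would repeatedly use that $(h_0,u_0,\dots,u_{r-1})$ are independent standard Gaussians (Assumption~\ref{assume:nonlinear}), so that an identity among linear functions of them forces the coefficient matrices to match.

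The heart of the argument is the following lemma: for every $\theta\in f^{-1}(0)$ and every $i$, the residual $R_i:=Pz_i-L_iz_0$ is almost surely $0$. Starting from the zero-loss equation $Px_i-L_ix_0-\sum_kT_ku_{i-1-k}=u_{i-1}$ and substituting $x_t=Vh_t+z_t$, one gets $R_i=-PVh_i+L_iVh_0+\sum_kT_ku_{i-1-k}+u_{i-1}$, so $R_i$ equals a fixed \emph{linear} function of the independent Gaussians $(h_0,u_0,\dots,u_{i-1})$; on the other hand $R_i=[P,\,-L_i](z_i;z_0)$ is a linear function of $(z_i,z_0)$. Let $\bar R_i:=\E[R_i\mid h_i,h_0]$, which as the Gaussian conditional expectation of a linear function is itself a linear function of $(h_i,h_0)$. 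Then $R_i-\bar R_i$ is the image under $[P,\,-L_i]$ of the fluctuation of $(z_i,z_0)$ about its conditional mean given $(h_i,h_0)$; by the structure of the model this fluctuation is conditionally mean zero given $(h_i,h_0)$ and conditionally independent of the controls, so $R_i-\bar R_i$ is uncorrelated with $h_0$ and with each $u_k$. But $R_i-\bar R_i$ is itself a homogeneous linear function of these independent standard Gaussians, hence it is identically $0$. Therefore $R_i=\bar R_i$ is simultaneously a linear function of $(z_i,z_0)$ and of $(h_i,h_0)$; if some coordinate of $R_i$ were nonzero with positive probability it would, being of the form $\langle(z_i,z_0),a'\rangle=\langle(h_i,h_0),b'\rangle$, force $\rho((z_i,z_0),(h_i,h_0))\ge 1$, contradicting Assumption~\ref{assume:nonlinear}. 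Hence $R_i=0$.

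Given the lemma, the zero-loss equation collapses to $PVh_i-L_iVh_0-\sum_kT_ku_{i-1-k}=u_{i-1}$ a.s., and matching coefficients of $h_0,u_0,\dots,u_{i-1}$ yields, for all $i$ and all $k$, $L_iV=PV\bar A^i$, $PV\bar B=I$, and $T_k=PV\bar A^k\bar B$. In particular every $\theta\in f^{-1}(0)$ satisfies $PB=I$ with $B=V\bar B$; since $B$ has full column rank (Assumption~\ref{assume:fullrank}), the unique minimum-Frobenius-norm solution of $PB=I$ is $P=B^+$, and since $\theta_{\mathrm{GT}}$ is feasible this gives $\Theta^*_0=\{\theta\in f^{-1}(0):P=B^+\}$. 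Fixing $P=B^+$: its rows lie in $V$, so $B^+z_i=0$ and the lemma forces $L_iz_0=0$ a.s., while $L_iV=B^+V\bar A^i$ determines the action of $L_i$ on $\mathrm{col}(V)$. Writing $\|L_i\|_F^2=\|L_i\Pi_V\|_F^2+\|L_i\Pi_{V^\perp}\|_F^2$ (an orthogonal split), the first term is fixed and the second is free subject only to $L_i\Pi_{V^\perp}z_0=0$, so it is minimized uniquely by $L_i\Pi_{V^\perp}=0$, i.e. $L_i=B^+A^i$ (which indeed has rows in $V$, annihilates $z_0$, and satisfies $L_iV=B^+V\bar A^i$); this in turn forces $T_k=B^+A^kB$, so $\theta^*=\theta_{\mathrm{GT}}$. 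For the ``moreover'' claim, $\mathrm{col}(P^\top)+\sum_{i=1}^r\mathrm{col}(L_i^\top)$ equals the column span of $[\,(B^+)^\top\ \ A^\top(B^+)^\top\ \cdots\ (A^\top)^r(B^+)^\top\,]$; every column lies in the $r$-dimensional space $V$ (as $\mathrm{col}(A^\top),\mathrm{col}(A),\mathrm{col}((B^+)^\top)\subset V$), and the first $r$ blocks already have rank $r$ by the $V$-controllability of $(A^\top,(B^+)^\top)$ (Assumption~\ref{assume:controllability}), so the span is exactly $V$.

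I expect the lemma to be the main obstacle. The delicate point is that $g$ need not be a deterministic function but only specify a conditional law $z_t\mid h_t$; one must carefully isolate the ``noise'' part of $(z_i,z_0)$ and verify that it is uncorrelated with the controls and with $h_0$ (this uses that the fresh noise generating $z_t$ from $h_t$ is independent of everything else given $h_t$), which is exactly what lets the linear-in-$z$ and linear-in-Gaussian descriptions of $R_i$ be reconciled. A minor secondary point is checking that the algebraic constraints produced by coefficient matching are mutually consistent, so that each of the two successive norm minimizations is over a nonempty set; this is automatic since $\theta_{\mathrm{GT}}$ satisfies all of them.
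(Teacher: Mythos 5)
Your proof is correct, and it reaches the paper's key intermediate identity --- that $Pz_i - L_i z_0 = 0$ almost surely for every zero-loss solution --- by a genuinely different route. The paper works with the first-order optimality conditions $\nabla f_i(\theta_i)=0$, splits them along $V$ and $V^\perp$, and manipulates second-moment matrices until it reaches $\theta_i K \Sigma_{\tilde z_i\tilde z_i}\bigl(I - \Sigma_{\tilde z_i\tilde z_i}^+\Sigma_{\tilde z_i\tilde h_i}\Sigma_{\tilde h_i\tilde h_i}^+\Sigma_{\tilde h_i\tilde z_i}\bigr)=0$; it then needs a separate lemma (Lemma~\ref{lem:bounded_correlation}) showing that appending the controls to $(h_i,h_0)$ does not increase the canonical correlation with $(z_i,z_0)$, so that Assumption~\ref{assume:nonlinear} makes the bracketed factor nonsingular. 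You instead exploit that zero expected value of a nonnegative integrand gives an almost-sure identity, condition the residual $R_i$ on $(h_i,h_0)$, and observe that $R_i-\bar R_i$ is simultaneously a homogeneous linear function of the independent Gaussians $(h_0,u)$ and uncorrelated with them (via the conditional independence of $(z_i,z_0)$ and $u$ given $(h_i,h_0)$), hence identically zero; Assumption~\ref{assume:nonlinear} then kills $\bar R_i$ directly through a perfect-correlation contradiction. Your conditioning step plays exactly the role of the paper's Lemma~\ref{lem:bounded_correlation} --- both rest on the same conditional-independence structure of the model, which you correctly flag as the delicate point --- but your packaging avoids the gradient computation and the covariance algebra and is arguably more transparent; the trade-off is that the paper's moment-based formulation transfers almost verbatim to the finite-sample analysis of Theorem~\ref{thm:sample_complexity}, where one only has approximate rather than exact identities and the almost-sure argument would not directly apply. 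The endgame --- coefficient matching against independent Gaussians, the two-stage norm minimization forcing first $P=B^+$ and then the vanishing of $L_i$ on $V^\perp$, and the use of $V$-controllability of $(A^\top,(B^+)^\top)$ for the ``moreover'' claim --- coincides with the paper's.
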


Recall the objective \eqref{eq:linear_relaxation} was:
\[
\underset{\theta}{\min} \frac{1}{2}\mathbb{E} \sum_{i=1}^r \| Px_i - L_ix_0 -\sum_{k=1}^{i-1}T_ku_{i-1-k} - u_{i-1}\|_2^2
\]

The main difficulty of proving this theorem lies in a mismatch between Assumption~\ref{assume:nonlinear} and our objective function \eqref{eq:linear_relaxation}: in the objective \eqref{eq:linear_relaxation}, we try to enforce a linear relationship between $x_i, x_0, u_1, u_2, ..., u_{i-1}$, while Assumption~\ref{assume:nonlinear} is about $(h_i, h_0)$ and $(z_i, z_0)$.
The following lemma helps relate the two.
\begin{lemma}
\label{lem:bounded_correlation} Let $i \in \{1,\ldots,r\}$.
Let $\tilde h_i = (h_i, h_0, u_0, \ldots, u_{i-2})$ and let $\tilde z_i = (z_i, z_0)$. Then
\[
\rho(\tilde h_i,\tilde z_i) \leq \rho((h_i,h_0),(z_i,z_0)).
\]
\end{lemma}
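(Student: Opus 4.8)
The plan is to exploit the fact that $\tilde z_i = (z_i,z_0)$ is literally the same random vector as $(z_i,z_0)$, so the linear functionals of $\tilde z_i$ and of $(z_i,z_0)$ coincide; only the left-hand argument of $\rho$ enlarges, from $(h_i,h_0)$ to $\tilde h_i = (h_i,h_0,u_0,\dots,u_{i-2})$. I would therefore reduce the lemma to the following claim: for every linear functional $\ell$ of $\tilde h_i$ there is a linear functional $\ell'$ of $(h_i,h_0)$ with $\E[(\ell')^2]\le\E[\ell^2]$ and $\E[\ell' s]=\E[\ell s]$ for every linear functional $s$ of $(z_i,z_0)$. Given this, substituting $\ell'$ into the definition of $\rho((h_i,h_0),(z_i,z_0))$ dominates the ratio achieved by the pair $(\ell,s)$, and taking suprema over $\ell$ and $s$ proves the lemma.

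To produce $\ell'$ I would set $\ell' := \E[\ell\mid h_i,h_0]$ and use two structural facts about the model. First, $\tilde h_i$ is a linear image of the jointly Gaussian vector $(h_0,u_0,\dots,u_{i-1})$ (the latent state evolves linearly), so $\tilde h_i$ and $(h_i,h_0)$ are jointly Gaussian; hence $\ell'$ is again a linear functional of $(h_i,h_0)$, conditional expectation is an $L^2$-contraction so $\E[(\ell')^2]\le\E[\ell^2]$, and $m:=\ell-\ell'$ satisfies $\E[m\mid h_i,h_0]=0$. Second, in the generative model $(z_i,z_0)$ is conditionally independent of $(u_0,\dots,u_{i-2})$ given $(h_i,h_0)$: $z_i$ is generated from $h_i$ alone and $z_0$ from $h_0$ alone, with the generating randomness independent of the controls. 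Since $m$ is a function of $(h_i,h_0,u_0,\dots,u_{i-2})$ and $s$ is a function of $(z_i,z_0)$, this conditional independence gives the factorization
\[
\E[ms] \;=\; \E\!\big[\,\E[m\mid h_i,h_0]\,\E[s\mid h_i,h_0]\,\big] \;=\; 0,
\]
so $\E[\ell s]=\E[\ell' s]$. Then, normalizing so that $\E[\ell s]\ge 0$ (permissible since $\rho$ is a supremum invariant under $s\mapsto -s$),
\[
\frac{\E[\ell s]}{\sqrt{\E[\ell^2]\,\E[s^2]}} \;=\; \frac{\E[\ell' s]}{\sqrt{\E[\ell^2]\,\E[s^2]}} \;\le\; \frac{\E[\ell' s]}{\sqrt{\E[(\ell')^2]\,\E[s^2]}} \;\le\; \rho\big((h_i,h_0),(z_i,z_0)\big),
\]
and taking the supremum over all linear functionals $\ell$ of $\tilde h_i$ and $s$ of $\tilde z_i$ yields $\rho(\tilde h_i,\tilde z_i)\le\rho((h_i,h_0),(z_i,z_0))$.

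\textbf{Expected main obstacle.} I expect the only delicate points to be (i) justifying that the $L^2$-projection of a linear functional of $\tilde h_i$ onto $(h_i,h_0)$ is again \emph{linear} and norm-non-increasing — this is standard for Gaussians but deserves care because $h_i$ is a deterministic linear function of $(h_0,u_0,\dots,u_{i-1})$, so the joint covariance is singular and one should argue via the closed Gaussian linear span rather than by inverting a covariance matrix; and (ii) extracting from the informal model description the precise conditional independence $(z_i,z_0)\perp(u_0,\dots,u_{i-2})\mid(h_i,h_0)$, i.e. that the redundant features are generated pointwise from the hidden states and do not see the controls. Both are conceptually easy but each warrants an explicit sentence in the full write-up.
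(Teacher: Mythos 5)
Your proposal is correct and is essentially the paper's own argument in different language: the paper's decomposition $a_3 = Q^\top v_1 + v_2$ with $Qv_2 = 0$, followed by the substitution $Qu = h_i - Ph_0$, is exactly the computation of $\E[\ell \mid h_i, h_0]$ and its orthogonal residual $\langle v_2, u\rangle$ that you describe abstractly, and both proofs then kill the cross term via independence of the residual from $z_0$ and conditional independence from $z_i$ given the hidden state before observing that dropping the residual only shrinks the denominator. The two delicate points you flag (linearity of the projection despite a singular joint covariance, and the precise conditional independence of the redundant features from the controls) are exactly the ones the paper handles explicitly.
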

\begin{proof}
Note that the definition of $\tilde h_i$ doesn't make sense for $i=1$. In that case, define $\tilde h_1 = (h_1, h_0)$.
Let $u = (u_0, u_1, \ldots, u_{i-1})$.
Observe that the coordinates of $\tilde h_i$ are a subset of the coordinates of $(h_i, h_0, u)$, so
$\rho(\tilde h_i, \tilde{z}_i) \leq \rho((h_i,h_0,u), \tilde{z}_i)$.
Note that there exist matrices $P$ and $Q$ such that $h_i = Ph_0 + Qu$. 
Let $a_1, a_2 \in \mathbb{R}^r$, $b_1, b_2 \in \mathbb{R}^{d},$ and $a_3 \in \mathbb{R}^{il}$. 
Write $a_3 = Q^\top v_1 + v_2$, where $Qv_2 = 0$.
Note that $u$ is independent of $h_0$ and $\langle v_2, u\rangle$ is independent of each coordinate of $h_i$ (as these are Gaussian random vectors).
Then we have
\begin{align*}
\mathbb{E}[(\langle a_1, h_i\rangle + \langle a_2, h_0\rangle + \langle a_3, u\rangle)^2] &= \mathbb{E}[(\langle a_1, h_i\rangle + \langle a_2, h_0\rangle + \langle Q^\top v_1, u\rangle + \langle v_2, u\rangle + \langle P^\top v_1, h_0\rangle - \langle P^\top v_1, h_0\rangle)^2]\\
&= \mathbb{E}[(\langle a_1 + v_1, h_i\rangle + \langle a_2 - P^\top v_1, h_0\rangle + \langle v_2, u\rangle)^2]\\
&= \mathbb{E}[(\langle a_1 + v_1, h_i\rangle + \langle a_2 - P^\top v_1, h_0\rangle)^2] + \mathbb{E}[\langle v_2, u\rangle^2]
\end{align*}
Now $u$ is independent of $z_0$, so $\mathbb{E}[\langle v_2,u\rangle\langle b_2,z_0\rangle] = 0$. 
Moreover, $u$ and $z_i$ are conditionally independent given $h_i$, so we have
\begin{align*}
\mathbb{E}[\langle v_2,u\rangle(\langle b_1, z_i\rangle + \langle b_2, z_0\rangle)] &= \mathbb{E}[\langle v_2,u\rangle\langle b_1,z_i\rangle]\\
&= \mathbb{E}[\mathbb{E}[\langle v_2,u\rangle\langle b_1,z_i\rangle|h_i]]\\
&= \mathbb{E}[\mathbb{E}[\langle v_2,u\rangle|h_i]\mathbb{E}[\langle b_1,z_i\rangle|h_i]]\\
&= \mathbb{E}[\mathbb{E}[\langle v_2, u\rangle]\mathbb{E}[\langle b_1,z_i\rangle|h_i]]\\
&= 0.
\end{align*} 
Then we have
\begin{align*}
&\frac{\mathbb{E}[(\langle a_1, h_i\rangle + \langle a_2, h_0\rangle + \langle a_3, u\rangle)(\langle b_1, z_i\rangle + \langle b_2,z_0\rangle)]}{\sqrt{\mathbb{E}[(\langle a_1, h_i\rangle + \langle a_2, h_0\rangle + \langle a_3, u\rangle)^2]\mathbb{E}[(\langle b_1,z_i\rangle + \langle b_2,z_0\rangle)^2]}}\\ 
&\qquad\leq \frac{\mathbb{E}[(\langle a_1 + v_1, h_i\rangle + \langle a_2 - P^\top v_1, h_0\rangle)(\langle b_1, z_i\rangle + \langle b_2,z_0\rangle)}{\sqrt{\mathbb{E}[(\langle a_1 + v_1, h_i\rangle + \langle a_2 - P^\top v_1, h_0\rangle)^2]\mathbb{E}[(\langle b_1,z_i\rangle + \langle b_2,z_0\rangle)^2]}}\\
&\qquad\leq \rho((h_i,h_0), (z_i,z_0)).
\end{align*}
\end{proof}

We are ready to prove Theorem~\ref{thm:linear_regression_problem}:

\begin{proof}
The main idea of the proof is to derive conditions for the variables based on first-order optimality conditions. 
We first prove that the optimal variables have support only on the linearizing subspace. 
As a consequence, we can then show that these variables equal the true model parameters.

To start, fix $i \in \{1,\ldots,r\}$, define $\theta_i  = [P\,\,L_i\,\,T_1\,\cdots\,T_{i-1}]$, let $\tilde y_i = (y_i, -y_0,-u_{i-2},\ldots,-u_0)$, $\tilde h_i = (h_i, -h_0,-u_{i-2},\ldots,-u_0)$, and $\tilde z_i = (z_i,-z_0)$. 
Define $K = [I\,\, 0]^\top$ to be the block matrix that satisfies $\theta_iK= [P\,\,L_i]$. 
Define $\tilde V = \text{diag}(V, V, I, \ldots, I)$ to be the block diagonal matrix that satisfies $\tilde y_i = \tilde V \tilde h_i$, and note that $\tilde V$ has full column rank. 
Observe that there exists a matrix $M$ such that $u_{i-1} = M\tilde h_i$.

We can now express the objective function as  $f(\theta) = \sum_{i=1}^\tau f_i(\theta_i)$, where $f_i(\theta_i) = \frac{1}{2}\mathbb{E}\|\theta_i(K\tilde{z}_i +\tilde V \tilde{h}_i) - u_{i-1}\|_2^2$. Since each $f_i$ has minimal value $0$, any optimal point for $f$ must simultaneously optimize each $f_i$. Hence, $\nabla f(\theta_i) = 0$ is a necessary condition for optimality. 
To this end, we compute the gradient of $f_i$ as
\[
\nabla f_i(\theta_i) = \theta_i(K\Sigma_{\tilde{z}_i\tilde{z}_i}K^\top + \tilde V\Sigma_{\tilde{h}_i\tilde{h}_i}\tilde V^\top + \tilde V\Sigma_{\tilde{h}_i\tilde{z}_i}K^\top+K\Sigma_{\tilde{z}_i\tilde{h}_i}\tilde V^\top) - \Sigma_{u_{i-1}\tilde{z}_i}K^\top - \Sigma_{u_{i-1}\tilde h_i}\tilde V^\top
\]
We split the optimality condition according to orthogonal subspaces $V$ and $V^\perp$ to obtain
\begin{align}
\label{eq:grad1}
0 &= \theta_i(\tilde V\Sigma_{\tilde{h}_i\tilde{h}_i}\tilde V^\top + K\Sigma_{\tilde{z}_i\tilde{h}_i}\tilde V^\top) -  \Sigma_{u_{i-1}\tilde h_i}\tilde V^\top\\
\label{eq:grad2}
0 &= \theta_i(K\Sigma_{\tilde{z}_i\tilde{z}_i}K^\top + \tilde V\Sigma_{\tilde{h}_i\tilde{z}_i}K^\top)- \Sigma_{u_{i-1}\tilde{z}_i}K^\top
\end{align}
From \eqref{eq:grad1}, we have $\theta_i\tilde V\Sigma_{\tilde{h}_i\tilde{h}_i} =  \Sigma_{u_{i-1}\tilde h_i}- \theta_i K\Sigma_{\tilde{z}_i\tilde{h}_i}$, and plugging this into \eqref{eq:grad2} (while also clearing $K^\top$ by right-multiplying the equation by $K$) gives
\begin{align*}
0 &= \theta_iK\Sigma_{\tilde{z}_i\tilde{z}_i} + \theta_i\tilde V\Sigma_{\tilde{h}_i\tilde{h}_i}\Sigma_{\tilde{h}_i\tilde{h}_i}^+\Sigma_{\tilde{h}_i\tilde{z}_i} -  \Sigma_{u_{i-1}\tilde{z}_i}\\
&= \theta_iK\Sigma_{\tilde{z}_i\tilde{z}_i} - \theta_iK\Sigma_{\tilde{z}_i\tilde{h}_i}\Sigma_{\tilde{h}_i\tilde{h}_i}^+\Sigma_{\tilde{h}_i\tilde{z}_i} -  \Sigma_{u_{i-1}\tilde{z}_i} +  \Sigma_{u_{i-1}\tilde h_i}\Sigma_{\tilde{h}_i\tilde{h}_i}^+\Sigma_{\tilde{h}_i\tilde{z}_i}\\
&= \theta_iK(\Sigma_{\tilde{z}_i\tilde{z}_i} - \Sigma_{\tilde{z}_i\tilde{h}_i}\Sigma_{\tilde{h}_i\tilde{h}_i}^+\Sigma_{\tilde{h}_i\tilde{z}_i}) - M\Sigma_{\tilde h_i\tilde z_i} + M\Sigma_{\tilde h_i\tilde h_i}\Sigma_{\tilde h_i\tilde h_i}^+\Sigma_{\tilde h_i\tilde z_i}\\
&= \theta_iK\Sigma_{\tilde z_i\tilde z_i}(I - \Sigma_{\tilde z_i\tilde z_i}^+\Sigma_{\tilde{z}_i\tilde{h}_i}\Sigma_{\tilde{h}_i\tilde{h}_i}^+\Sigma_{\tilde{h}_i\tilde{z}_i}).
\end{align*}
By Lemma \ref{lem:bounded_correlation} and Assumption \ref{assume:nonlinear}, we have that $I - \Sigma_{\tilde z_i\tilde z_i}^+\Sigma_{\tilde{z}_i\tilde{h}_i}\Sigma_{\tilde{h}_i\tilde{h}_i}^+\Sigma_{\tilde{h}_i\tilde{z}_i}$ is nonsingular, so we conclude that $\theta_iK\Sigma_{\tilde z_i \tilde z_i} = 0$. 
In particular, this implies that $\theta_i K\Sigma_{\tilde z_i \tilde y_i} = 0$. 

We can now simplify \eqref{eq:grad1} as $0 = \theta_i\tilde V\Sigma_{\tilde h_i\tilde h_i}\tilde V^\top - \Sigma_{u_{i-1}\tilde h_i}\tilde V^\top = \theta_i \Sigma_{\tilde y_i\tilde y_i} - \Sigma_{u_{i-1}\tilde y_i}$.
This matrix equation can be naturally partitioned into blocks according to the block partition of $\theta_i$ and $\tilde y_i$.
Reading out the second block column gives $0 = -PA^i + L_iVV^\top$.
Reading out the $(k+1)$-st block column (for $k \geq 1$) gives $0 = -PA^kB + T_k$. 
The first block column gives
\begin{align*}
0 &= P\Sigma_{y_iy_i} - L_i\Sigma_{y_0y_i} -\sum_{k=1}^{i-1}T_k\Sigma_{u_{i-1-k}y_i} - \Sigma_{u_{i-1}y_i}\\
&= P(A^i(A^i)^\top + \sum_{k=1}^{i-1}A^kB(A^kB)^\top + BB^\top)  - L_i(A^i)^\top - \sum_{k=1}^{i-1}T_k(A^kB)^\top - B^\top\\
&= (PB-I)B^\top + (PA^i - L_i)(A^i)^\top + \sum_{k=1}^{i-1}(PA^kB - T_k)(A^kB)^\top\\
&= (PB-I)B^\top.
\end{align*}
Using Assumtion \ref{assume:fullrank}, we right-multiply by $(B^+)^\top$ to obtain $PB = I$. Since $P$ is the minimal-norm optimal solution, we conclude that $P = B^+$.
Then $L_iVV^\top = B^+A^i$ and $T_k = B^+A^kB$. Since we are also minimizing the norm of $L_i$, we see that $L_i$ must vanish on $V^\perp$, so that $L_i = L_iVV^\top$, and $L_i = B^+A^i$.
That we recover all of $V$ is a consequence of Assumption \ref{assume:controllability}.
\end{proof}

\subsection{Finite Sample Analysis}
In this section, for each random vector $z_i, y_i, h_i, u_i$ involved in the model, we consider corresponding sample matrices $Z_i, Y_i \in \mathbb{R}^{d\times n}, H_i \in \mathbb{R}^{r\times n}, U_i \in \mathbb{R}^{l\times n}$.
For sample covariance matrices we use the notation $\Sigma_{U_iU_i} = \frac{1}{n}U_iU_i^\top, \Sigma_{Y_iZ_i} = \frac{1}{n} Y_iZ_i^\top$, and so on. 

More precisely, let $H_0 \in \mathbb{R}^{r\times n}$ be a random matrix whose columns are independent standard Gaussian vectors.
Likewise, for $k = 0,\ldots,r-1$, let $U_k \in \mathbb{R}^{l\times n}$ be a matrix whose columns are independent standard Gaussian vectors. For $i = 0,\ldots, r$, let $H_i = \bar A^iH_0 + \sum_{k=0}^{i-1}\bar A^kBU_{i-1-k}$, and let $X_i = VH_i + Z_i$, where the columns of $X_i$ are the observed states and the columns of $Z_i$ are the nonlinear parts. 

Define empirical canonical correlation in the natural way: for random vectors $y$ and $z$, let $Y$ and $Z$ be the corresponding sample matrices and define
\[
\rho(Y,Z) = \max_{a,b}\frac{a^\top \Sigma_{YZ}b}{\sqrt{a^\top \Sigma_{YY}a}\sqrt{b^\top \Sigma_{ZZ} b}}.
\] 
Note that $\rho(Y,Z)^2$ is the largest singular value of $\Sigma_{ZZ}^+\Sigma_{ZY}\Sigma_{YY}^+\Sigma_{YZ}$.
Since our analysis of the optimization problem depends on such matrices being invertible, the main thing we need to show is that the empirical canonical correlation $\rho(Y,Z)$ is close to $\rho(y,z)$ when the number of samples is large enough. 
We utilize a concentration result stated in \citep{gao2019stochastic} that quantifies this. 
\begin{lemma}[Adapted from Corollary 7 of \cite{gao2019stochastic}]
\label{lem:empirical_corr}
Assume that $y \in \mathbb{R}^{k_1}$ and $z\in\mathbb{R}^{k_2}$ are sub-Gaussian, set $k = k_1+k_2$, and let $\epsilon \in (0,1)$. There exists a constant $C$  such that for any $t \geq 1$, if $n \geq Ct^2k\log^2k/\epsilon^2$ then $|\rho(Y,Z) - \rho(y,z)| \leq \epsilon$ with probability at least $1-\exp(-t^2k)$, 
\end{lemma}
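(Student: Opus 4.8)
The plan is to reduce the claim to a standard sub-Gaussian covariance-concentration estimate after whitening, and then transfer the resulting operator-norm bound to the canonical correlation by a perturbation argument. Since the statement is an adaptation of Corollary~7 of \cite{gao2019stochastic}, the main work is to recast $\rho$ as a smooth functional of the joint sample covariance and to verify that Assumption~\ref{assume:subgaussian} supplies exactly the hypotheses that the underlying concentration result requires.

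First I would rewrite both the population and empirical canonical correlations in whitened form. Writing $w = (y, z)$ for the concatenation and $\Sigma_{ww}$ for its population covariance, with blocks $\Sigma_{yy}, \Sigma_{yz}, \Sigma_{zz}$, one has $\rho(y,z) = \sigma_{\max}(\Sigma_{yy}^{-1/2}\Sigma_{yz}\Sigma_{zz}^{-1/2})$, and likewise $\rho(Y,Z) = \sigma_{\max}((\Sigma_{YY}^{+})^{1/2}\Sigma_{YZ}(\Sigma_{ZZ}^{+})^{1/2})$ for the sample covariances. When a diagonal block is singular one restricts to its column space; this is precisely what the pseudoinverse convention in the definition of $\rho(Y,Z)$ encodes. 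This identifies $\rho$ with the top singular value of the cross-correlation operator of the whitened variables, which is the object I will control.

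Next I would establish the concentration of the joint sample covariance. By Assumption~\ref{assume:subgaussian} the whitened vector $\Sigma_{ww}^{-1/2}w$ is sub-Gaussian with identity covariance, so its sample covariance concentrates in operator norm: there is a constant $C$ such that for $n \geq Ct^2 k \log^2 k/\epsilon^2$ one has $\|\Sigma_{ww}^{-1/2}\Sigma_{WW}\Sigma_{ww}^{-1/2} - I\| \leq \epsilon$ with probability at least $1 - \exp(-t^2 k)$. The $k\log^2 k$ dependence and the stated failure probability are exactly those delivered by the sub-Gaussian covariance estimate behind Corollary~7 (an $\epsilon$-net over the sphere combined with a Bernstein-type tail, with the logarithmic factors absorbing the net cardinality). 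Reading this bound off block by block gives simultaneous control of $\Sigma_{YY}, \Sigma_{ZZ}, \Sigma_{YZ}$ relative to their population counterparts, and in particular a two-sided eigenvalue bound keeping $\Sigma_{YY}$ and $\Sigma_{ZZ}$ well-conditioned on the relevant column spaces.

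Finally I would transfer this to $\rho$ by a perturbation argument. On the region where the diagonal blocks are bounded away from degeneracy, the map sending a joint covariance to $(\Sigma_{YY}^{+})^{1/2}\Sigma_{YZ}(\Sigma_{ZZ}^{+})^{1/2}$ is Lipschitz in operator norm, since the matrix inverse-square-root is Lipschitz away from $0$ and the middle factor is bounded; combining this with Weyl's inequality for singular values yields $|\rho(Y,Z) - \rho(y,z)| \leq \epsilon$ after rescaling the constant $C$. I expect the last step to be the main obstacle: controlling the sensitivity of the whitening $\Sigma_{yy}^{-1/2}$ to perturbations of $\Sigma_{YY}$, since a naive Lipschitz bound for the inverse-square-root blows up as the covariance approaches singularity. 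This is resolved by the simultaneous two-sided eigenvalue control from the previous step, which confines all perturbations to a neighborhood of $\Sigma_{ww}$ on which the functional is uniformly Lipschitz; once that neighborhood is secured, the remainder is a routine chaining of the three block estimates.
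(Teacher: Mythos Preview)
The paper does not actually prove this lemma. It is stated as an adaptation of Corollary~7 of \cite{gao2019stochastic}, and the only justification given is the remark immediately following the statement: the original result does not make the dependence on the failure-probability parameter $t$ explicit, and the paper notes that this dependence ``is easily obtained by using Corollary 5.50 from \cite{vershynin2010introduction} to include the parameter $t$.'' That is the entirety of the paper's argument.

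Your proposal, by contrast, sketches a self-contained proof: whiten, apply sub-Gaussian covariance concentration to the joint vector, and then push the operator-norm bound through to the top singular value of the cross-correlation via a Lipschitz/Weyl perturbation argument. This is a reasonable outline and is in fact the style of argument underlying the cited result in \cite{gao2019stochastic}, so in spirit you are reconstructing what the paper merely cites. One small point: you invoke Assumption~\ref{assume:subgaussian} as supplying the sub-Gaussian hypothesis, but the lemma is stated for generic sub-Gaussian $y$ and $z$; Assumption~\ref{assume:subgaussian} is only used later, when the lemma is \emph{applied} to the particular vectors $\tilde h_i, \tilde z_i$ in Lemma~\ref{lem:sample_main}. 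For the lemma itself, sub-Gaussianity is simply a hypothesis and need not be traced back to the model.
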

Note that the statement of this result in \cite{gao2019stochastic} is slightly different since they don't specify the dependence of the sample complexity on the failure probability parameter $t$. Our version here is easily obtained by using Corollary 5.50 from \cite{vershynin2010introduction} to include the parameter $t$. 

We also need to ensure that a certain Gaussian empirical covariance matrix is invertible. We use the following standard matrix concentration inequality.
\begin{lemma}[From Corollary 5.35 of \cite{vershynin2010introduction}]
\label{lem:gaussian_concentration}
Let $Y \in \mathbb{R}^{k\times n}$ be a matrix whose entries are independent standard Gaussian random variables. Then for every $t \geq 0$, with probability at least $1-2\exp(-t^2/2)$ it holds that
\[
\sqrt{n}-\sqrt{k} - t \leq \sigma_{min}(Y).
\]
\end{lemma}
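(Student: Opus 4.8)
The plan is to prove the stated one-sided lower bound on $\sigma_{\min}(Y)$ by combining a sharp lower bound on the \emph{expected} smallest singular value with Gaussian concentration of measure. First observe that the claim is only nontrivial when $\sqrt n - \sqrt k - t > 0$, so I may assume $n \ge k$ (otherwise $\sqrt n - \sqrt k - t < 0 \le \sigma_{\min}(Y)$ and the inequality holds with probability one). Since a matrix and its transpose share the same nonzero singular values, I will work with the tall matrix $A := Y^\top \in \R^{n\times k}$, which has full column rank $k$ almost surely, satisfies $\sigma_{\min}(A) = \sigma_{\min}(Y)$, and admits the variational characterization $\sigma_{\min}(A) = \min_{x \in S^{k-1}} \|Ax\|_2 = \min_{x \in S^{k-1}}\max_{y \in S^{n-1}} \langle Ax, y\rangle$. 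The argument then splits into (i) showing that $\sigma_{\min}(A)$ concentrates around its mean, and (ii) bounding that mean below by $\sqrt n - \sqrt k$.

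For step (i), the map $A \mapsto \sigma_{\min}(A)$ is $1$-Lipschitz in the Frobenius norm: from the variational form, if $x^\star$ attains the minimum for $A$ then $\sigma_{\min}(A') \le \|A' x^\star\|_2 \le \|A x^\star\|_2 + \|(A'-A)x^\star\|_2 \le \sigma_{\min}(A) + \|A'-A\|_F$, and symmetrically, so $|\sigma_{\min}(A)-\sigma_{\min}(A')| \le \|A-A'\|_F$. Viewing the $nk$ entries of $A$ as a standard Gaussian vector, the Gaussian concentration inequality for Lipschitz functions gives $P(|\sigma_{\min}(A) - \E\sigma_{\min}(A)| > t) \le 2\exp(-t^2/2)$. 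Hence with probability at least $1 - 2\exp(-t^2/2)$ we have $\sigma_{\min}(A) \ge \E\sigma_{\min}(A) - t$. This step is routine.

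For step (ii), the heart of the argument, I would invoke Gordon's Gaussian min-max comparison inequality. Compare the process $X_{x,y} = \langle Ax, y\rangle$ indexed by $(x,y) \in S^{k-1}\times S^{n-1}$ with the decoupled process $W_{x,y} = \langle g, x\rangle + \langle h, y\rangle$, where $g \sim N(0,I_k)$ and $h \sim N(0,I_n)$ are independent. A direct covariance computation verifies the increment hypotheses of Gordon's theorem: for fixed $x$ the two processes have identical increments in $y$ (both equal $\|y-y'\|_2^2$), while for $x \ne x'$ one checks $\E(X_{x,y}-X_{x',y'})^2 \le \E(W_{x,y}-W_{x',y'})^2$, which reduces to the elementary inequality $(1-\langle x,x'\rangle)(1-\langle y,y'\rangle) \ge 0$. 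Gordon's theorem then yields $\E\min_x\max_y X_{x,y} \ge \E\min_x\max_y W_{x,y}$, and the right-hand side decouples as $\E\|h\|_2 - \E\|g\|_2$. Combined with step (i), this gives $\sigma_{\min}(Y) = \sigma_{\min}(A) \ge \E\|h\|_2 - \E\|g\|_2 - t$ on the good event.

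It remains to check that $\E\|h\|_2 - \E\|g\|_2 \ge \sqrt n - \sqrt k$. Writing $\gamma_m := \E\|N(0,I_m)\|_2 = \sqrt 2\,\Gamma((m+1)/2)/\Gamma(m/2)$, this is exactly the statement that $m \mapsto \gamma_m - \sqrt m$ is nondecreasing, which follows from standard estimates on the Gamma function. I expect this expectation bound --- obtaining the \emph{sharp} constant $\sqrt n - \sqrt k$ rather than a lossy one --- to be the main obstacle: the concentration step is mechanical, but the clean gap requires the Gordon comparison (with careful attention to the direction conventions in its hypotheses) together with the monotonicity of $\gamma_m - \sqrt m$. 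A cruder $\epsilon$-net argument would also control $\sigma_{\min}$ with high probability, but it would degrade the constant and introduce spurious dimension and logarithmic factors, so Gordon's inequality is essential for the stated form.
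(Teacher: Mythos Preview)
The paper does not prove this lemma at all; it is simply quoted from Vershynin's notes, where Corollary~5.35 is itself obtained exactly by the route you describe (Gordon's comparison inequality for $\E\sigma_{\min}$ combined with Gaussian concentration for the $1$-Lipschitz map $A\mapsto\sigma_{\min}(A)$). So your proposal is correct and coincides with the standard argument behind the cited reference.

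One small point to tighten: in step~(ii) you should be explicit about which process plays which role in Gordon's hypotheses. With $X_{x,y}=\langle Ax,y\rangle$ and $W_{x,y}=\langle g,x\rangle+\langle h,y\rangle$ the increment comparison you compute gives $\E(X-X')^2\le\E(W-W')^2$ for all pairs (with equality when $x=x'$); the form of Gordon's theorem that turns this into the desired direction is the one where the process with the \emph{larger} cross-increments sits on the side whose min--max is smaller, yielding $\E\min_x\max_y W_{x,y}\le\E\min_x\max_y X_{x,y}$, i.e.\ $\E\|h\|_2-\E\|g\|_2\le\E\sigma_{\min}(A)$. Your conclusion is right but the way you phrased the hypothesis/conclusion pairing could confuse a reader about which direction of Gordon you are invoking. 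The final monotonicity $m\mapsto\gamma_m-\sqrt m$ nondecreasing is indeed the step that pins down the sharp constant; it follows cleanly from the identity $\gamma_m\gamma_{m+1}=m$ (a one-line Gamma computation) together with log-convexity of $\Gamma$, so you can make that precise rather than appeal to ``standard estimates.''
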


We now use these two concentration results to prove our main lemma for this section.
\begin{lemma}
\label{lem:sample_main}
Let $\tilde Z_i$ and $\tilde H_i$ be the sample matrices of $\tilde z_i$ and $\tilde h_i$, respectively (from the proof of Theorem~\ref{thm:linear_regression_problem}). 
Further define  $\hat H_i$ to be the sample matrix for the random vector $\hat h_i := (h_0,u_{i-1},u_{i-2},\ldots,u_0)$. 
Let $\mathcal{E}_i$ denote the event that $|\rho(\tilde H_i,\tilde Z_i) -\rho(\tilde h_i,\tilde z_i)| \leq (1-\rho)/2$.
Let $\mathcal{F}_i$ denote the event that $\sigma_{min}(\hat H_i) \geq 1/2$. 
There exists a constant $C_0$ such that if $n = C_0(d+rl)\log r\log^2(d+rl)/(1-\rho)^2$, then 
\[
P\left(\bigcap_{i=1}^r \mathcal{E}_i\cap \mathcal{F}_i\right) \geq 0.99.
\]
\end{lemma}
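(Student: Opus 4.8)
The plan is to bound the $2r$ events $\mathcal{E}_1,\ldots,\mathcal{E}_r$ and $\mathcal{F}_1,\ldots,\mathcal{F}_r$ individually and then union bound, choosing every failure-probability parameter to be of order $\sqrt{\log r}$ so that the total failure probability stays below $0.01$ while a single sample size of the claimed order suffices. Two structural facts drive the argument. First, $\hat h_i = (h_0, u_{i-1}, u_{i-2}, \ldots, u_0)$ is assembled from the jointly independent standard Gaussian vectors $h_0, u_0, \ldots, u_{i-1}$, so $\hat H_i$ is a matrix in $\mathbb{R}^{(r+il)\times n}$ with i.i.d.\ standard Gaussian entries and Lemma~\ref{lem:gaussian_concentration} applies directly. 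Second, $(\tilde h_i, \tilde z_i)$ is a fixed signed-permutation image of the concatenation of $\xi_i = (z_i, z_0, h_i, h_0)$ with the Gaussian controls $u_0, \ldots, u_{i-2}$; any linear functional of this concatenation is the sum of a sub-Gaussian functional of $\xi_i$ (Assumption~\ref{assume:subgaussian}) and a Gaussian, hence sub-Gaussian, so $\tilde h_i$ and $\tilde z_i$ satisfy the hypothesis of Lemma~\ref{lem:empirical_corr} with ambient dimension $k = 2d + 2r + (i-1)l = O(d+rl)$ (using $i \le r \le d$).

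For $\mathcal{E}_i$, apply Lemma~\ref{lem:empirical_corr} with $\epsilon = (1-\rho)/2$ and $t = \sqrt{\log(400r)}$. Since $k \ge 1$, the failure probability is $\exp(-t^2 k) \le \exp(-\log(400r)) = 1/(400r)$, and the hypothesis $n \ge C t^2 k \log^2 k / \epsilon^2$ reduces to $n \ge C'(d+rl)\log r \log^2(d+rl)/(1-\rho)^2$; this is exactly where the $\log r$ factor in the statement originates. Combining $\mathcal{E}_i$ with Lemma~\ref{lem:bounded_correlation} and Assumption~\ref{assume:nonlinear}, which together give $\rho(\tilde h_i, \tilde z_i) \le \rho$, shows that on $\mathcal{E}_i$ the empirical correlation satisfies $\rho(\tilde H_i, \tilde Z_i) \le (1+\rho)/2 < 1$, which is what the subsequent analysis of the empirical optimization problem requires.

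For $\mathcal{F}_i$, apply Lemma~\ref{lem:gaussian_concentration} to $\hat H_i$ with dimension $k' = r + il \le d + rl$ and $t = \Theta(\sqrt{\log r})$, chosen large enough that $2\exp(-t^2/2) \le 1/(400r)$, obtaining $\sigma_{min}(\hat H_i) \ge \sqrt{n} - \sqrt{k'} - t \ge 1/2$ whenever $n = \Omega(d + rl + \log r)$ — a requirement already subsumed by the sample-size bound forced by $\mathcal{E}_i$. A union bound then gives
$P\big(\bigcap_{i=1}^r \mathcal{E}_i \cap \mathcal{F}_i\big) \ge 1 - \sum_{i=1}^r\big(P(\mathcal{E}_i^c) + P(\mathcal{F}_i^c)\big) \ge 1 - r\big(\tfrac{1}{400r} + \tfrac{1}{400r}\big) = 1 - \tfrac{1}{200} \ge 0.99$,
absorbing all numerical constants into $C_0$.

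The main obstacle is the second structural fact: rigorously verifying that $(\tilde h_i, \tilde z_i)$ meets the \emph{sub-Gaussianity} hypothesis of Lemma~\ref{lem:empirical_corr} — in particular that, after whitening by its own covariance, every linear functional has a uniformly bounded sub-Gaussian parameter — starting only from Assumption~\ref{assume:subgaussian} on $\xi_i$ and the Gaussianity of the controls. Because $h_i$ and the controls are correlated, one cannot appeal to independence and must instead argue at the level of sub-Gaussian (Orlicz) norms of sums, and also check that the relevant covariance is non-degenerate so the whitening is well-defined. The remaining steps — the dimension count $k = O(d+rl)$ and the verification that a single choice of $n$ of the stated order simultaneously meets the hypotheses of both concentration lemmas for every $i$ — are routine bookkeeping.
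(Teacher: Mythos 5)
Your proposal is correct and follows essentially the same route as the paper's proof: set the failure parameter $t = \Theta(\sqrt{\log r})$, apply Lemma~\ref{lem:empirical_corr} to $(\tilde h_i, \tilde z_i)$ for each $\mathcal{E}_i$ and Lemma~\ref{lem:gaussian_concentration} to the Gaussian matrix $\hat H_i$ for each $\mathcal{F}_i$, and union bound over the $2r$ events. The sub-Gaussianity verification you flag as the main obstacle is indeed the one point the paper itself passes over with only a brief remark that the same constant $C$ works for every $i$, so your caution there is warranted but does not indicate a divergence in approach.
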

\begin{proof}
Set the failure probability parameter $t = C'\sqrt{\log r}$, where $C'$ is a large enough constant such that
\[
r(\exp(-t^2(2d+2r)) + 2\exp(-t^2/2)) \leq 0.01.
\]
Let $C$ be the constant from Lemma \ref{lem:empirical_corr} applied to $\tilde h_i$ and $\tilde z_i$ with $\epsilon = (1-\rho)/2$ -- we can take the same $C$ for each $i$ since we assume each $(h_i, z_i)$ satisfy the same sub-Gaussian property.
Set $C_0$ large enough so that when $n = C_0(d+rl)\log r\log^2(d+rl)/(1-\rho)^2$, the following hold for $i = 1, \ldots, r$:
\begin{align*}
n &\geq 4Ct^2(2d+2r+(i-2)l)\log^2(2d+2r + (i-2)l)/(1-\rho)^2,\\
\sqrt{n} &\geq 1/2 + \sqrt{r+(i-1)l} + t
\end{align*}

We first analyze $P(\mathcal{E}_i)$. Apply Lemma \ref{lem:empirical_corr} to $\tilde h_i \in \mathbb{R}^{2r+(i-2)l}$ and $\tilde z_i \in \mathbb{R}^{2d}$ with $\epsilon = (1-\rho)/2$ and the specified value of $t$. Then we see that $n$ is large enough to ensure that $P(\mathcal{E}_i) \geq 1-\exp(-t^2(2d+2r+(i-2)l)) \geq 1-\exp(-t^2(2d+2r))$.

Next, consider $P(\mathcal{F}_i)$. Apply Lemma \ref{lem:gaussian_concentration} to $\hat H_i$ with the specified value of $t$. 
Again it is clear that $n$ is large enough to ensure that $P(\mathcal{F}_i) \geq 1-2\exp(-t^2/2)$.

Finally, by the union bound, 
\begin{align*}
P\left(\bigcap_{i=1}^r \mathcal{E}_i\cap \mathcal{F}_i\right) &\geq 1-\sum_{i=1}^r(2-P(\mathcal{E}_i)+P(\mathcal{F}_i))\\
&\geq 1 - r(\exp(-t^2(2d+2r))+2\exp(-t^2/2))\\
&\geq 0.99.
\end{align*}
\end{proof}

We now prove Theorem \ref{thm:sample_complexity}.

\begin{proof}
Lemma \ref{lem:sample_main} provides the sample complexity and success probability -- all that's left is to analyze the empirical loss assuming that the conclusion of Lemma \ref{lem:sample_main} holds. 
Our analysis of the empirical loss is close to that of the population loss. 
We use the same notation as in the proof of Theorem~\ref{thm:linear_regression_problem}, e.g. $\tilde Y_i, \tilde H_i, \tilde U_i, \tilde Z_i$ are the sample matrices of $\tilde y_i, \tilde h_i, \tilde u_i, \tilde z_i$, respectively. Likewise, define $\theta_i$, $K$, and $\tilde V$ as before. 
We additionally define $\hat H_i$ to be the sample matrix for $(h_0,u_{i-1},u_{i-2},\ldots,u_0)$.

By the same argument as in the proof of Theorem~\ref{thm:linear_regression_problem}, we have that
\[
0 = \theta_i K\Sigma_{\tilde Z_i\tilde Z_i}(I-\Sigma_{\tilde Z_i\tilde Z_i}^+\Sigma_{\tilde Z_i\tilde H_i}\Sigma_{\tilde H_i\tilde H_i}^+\Sigma_{\tilde H_i\tilde Z_i}).
\]
The spectral norm of $-\Sigma_{\tilde Z_i\tilde Z_i}^+\Sigma_{\tilde Z_i\tilde H_i}\Sigma_{\tilde H_i\tilde H_i}^+\Sigma_{\tilde H_i\tilde Z_i}$ is $\rho(\tilde H_i,\tilde Z_i)$, and by assumption and Lemma \ref{lem:sample_main}, we have
\[\rho(\tilde H_i,\tilde Z_i)\leq \rho(\tilde h_i,\tilde z_i) + (1-\rho)/2 \leq (1+\rho)/2 < 1.
\] 
Hence, $(I-\Sigma_{\tilde Z_i\tilde Z_i}^+\Sigma_{\tilde Z_i\tilde H_i}\Sigma_{\tilde H_i\tilde H_i}^+\Sigma_{\tilde H_i\tilde Z_i})$ is robustly nonsingular, so we conclude that $\theta_i K\Sigma_{\tilde Z_i\tilde Z_i} = 0$ and likewise $\theta_i K \Sigma_{\tilde Z_i\tilde Y_i} = 0$.

Using this fact, we can continue to follow the proof of Theorem \ref{thm:linear_regression_problem} to obtain
\[
0 = \theta_i\tilde V\Sigma_{\tilde H_i\tilde H_i}\tilde V^\top - \Sigma_{U_{i-1}\tilde H_i}\tilde V^\top.
\]
Analyzing this equation is slightly more complicated now due to the fact that sample cross-covariance terms like $\Sigma_{H_0U_j}$ are nonzero (whereas the corresponding population covariances vanish due to independence). 
By splitting the equation into block columns, grouping terms, and simplifying the terms that cancel, it is straightforward to see that 
\[
0 = [(PA^i - L_i) \,\, (PB - I)\,\, (PAB-T_1)\,\,\cdots\,\,(PA^{i-1}B-T_{i-1})]\tilde V\Sigma_{\hat H_i\hat H_i}\tilde V^\top.
\]
By assumption, $\sigma_{min}(\hat H_i) \geq 1/2$, so $\Sigma_{\hat H_i\hat H_i}$ is robustly nonsingular. 
Hence, we have that
\[
0 = [(PA^i - L_i) \,\, (PB - I)\,\, (PAB-T_1)\,\,\cdots\,\,(PA^{i-1}B-T_{i-1})]\tilde V,
\]
which implies that $PB = I$ and $(PA^i-L_i)V = 0$ for all $i$. 
Since we assume $P$ has minimal norm, we conclude that $P = B^+$.
Thus, $L_iV = B^+A^iV$,  i.e. $L_i = B^+A^i$ on  the subspace $V$. By the construction of our minimal norm solution, we know that $L_i$ must vanish on $V^\perp$, and this completes the proof.
\end{proof}

\subsection{Handling Noise in the Model}
We now consider a simple version of our model with noise, and show that our algorithm identifies the correct subspace (up to an error proportional to the noise) in this setting as well. 
We consider a one-step trajectory where the initial state $x_0 = 0$, and we assume that our observation is corrupted by independent centered noise.
In particular, we can write the state observation as $x =  Bu + z + \xi$, where $\xi$ is a random vector in $\mathbb{R}^d$ that is independent of both $u$ and $z$.
Assume the noise covariance matrix $\Sigma_{\xi\xi}$ splits orthogonally along the subspace $V$ and $V^\perp$, that is, we can write
$\Sigma_{\xi\xi} = \Sigma_1 + \Sigma_2$, where $\Sigma_1$ is the covariance of the noise projected onto $V$ and $\Sigma_2$ is the covariance of the noise projected onto the column-span of $V^\perp$.
This orthogonal splitting is satisfied when $\xi$ is a spherical Gaussian random vector, for example.

Given this noisy state observation $x$ and control input $u$, the task is to recover the column-span of $B$ by learning a linear inverse model:
\begin{equation}
\label{eq:noisy_regression}
\underset{P}{\min} \frac{1}{2}\mathbb{E}_{u,\xi} \|Px - u\|_2^2
\end{equation}
Due to the noise term, this linear model will not achieve zero error.
However, we can bound the error of our solution as a function of the noise magnitude and correlation bound.
\begin{theorem}
\label{thm:noisy_regression}
Let $u \in \mathbb{R}^l$ and $\xi \in \mathbb{R}^d$ be independent spherical Gaussian random vectors, with $\Sigma_{\xi\xi} = \sigma^2I$. 
Let $P$ be the minimal norm optimal solution to the optimization problem \eqref{eq:noisy_regression}.
Write $P = P_1 + P_2$, where $P_1$ is the projection of $P$ onto $V$, and $P_2$ is its projection onto $V^\perp$.
In the noisy setting described above, we have $P_1 = B^+$ and 
\[
\|P_2\|_2 \leq \frac{\sigma\rho}{2\sqrt{1-\rho^2}}\|B^+\|_2\|P_1\|_2
\]
where $\sigma = \lambda_{max}(\Sigma_{\xi\xi})$ and $\rho := \rho(u,z)$.
\end{theorem}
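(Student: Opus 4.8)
The plan is to mimic the structure of the proof of Theorem \ref{thm:linear_regression_problem}: write down the first-order optimality (normal) equation for the least-squares problem \eqref{eq:noisy_regression}, split it along the orthogonal decomposition $\mathbb{R}^d = V \oplus V^\perp$, and read off the two components of $P$. First I would note that \eqref{eq:noisy_regression} is an ordinary linear regression of $u$ on $x = Bu + z + \xi$, so the minimal-norm optimal $P$ satisfies $P\,\Sigma_{xx} = \Sigma_{ux}$, i.e. $P(BB^\top + \Sigma_{zz} + \sigma^2 I) = B^\top$, using independence of $u$, $z$, $\xi$ and the fact that $\mathbb{E}[uu^\top] = I$ (so $\Sigma_{ux} = \mathbb{E}[u u^\top B^\top] = B^\top$). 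Here I am also using that $z$ is mean-zero (it is a function of $h_0$ with $h_0$ standard Gaussian — actually I should be slightly careful; if $z$ has a nonzero mean one centers it, but the statement's correlation hypothesis suggests treating $z$ as centered, and the cross term $\mathbb{E}[uz^\top]$ vanishes anyway since $u$ and $z$ are independent).

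Next I would exploit the orthogonal splittings. Write $\Sigma_{zz} = S_V$ supported on $V$ (since $z = V^\perp g(h_0)$ actually lives in $V^\perp$ — wait, here $z$ as used in \eqref{eq:noisy_regression} is the $\mathbb{R}^d$ vector $V^\perp g(\cdot)$, so $\Sigma_{zz}$ is supported on $\mathrm{col}(V^\perp)$), and $\mathrm{col}(B) \subset V$ so $BB^\top$ is supported on $V$; meanwhile $\sigma^2 I = \sigma^2 \Pi_V + \sigma^2 \Pi_{V^\perp}$ splits by the spherical-noise hypothesis. Decompose $P = P_1 + P_2$ with $P_1 = P\Pi_V$, $P_2 = P\Pi_{V^\perp}$. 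Restricting the normal equation to $V$: on $V$ the operator $\Sigma_{xx}$ restricted to $V$ is $BB^\top + \sigma^2\Pi_V$ (the $\Sigma_{zz}$ part contributes nothing on $V$), and the right side $B^\top$ maps into $V$. However, $P_2$ also feeds into the equation through cross-block terms only if $\Sigma_{xx}$ has off-diagonal blocks — but it does not, precisely because all three covariance pieces are block-diagonal w.r.t. $V \oplus V^\perp$. So the $V$-block gives $P_1(BB^\top + \sigma^2 \Pi_V) = B^\top$, and the $V^\perp$-block gives $P_2(\Sigma_{zz} + \sigma^2\Pi_{V^\perp}) = 0$, forcing $P_2 = 0$ on the support of $\Sigma_{zz} + \sigma^2\Pi_{V^\perp}$ — which, when $\sigma > 0$, is all of $V^\perp$, so $P_2 = 0$. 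This would make the bound trivial, so evidently the intended noise model is more subtle: the noise $\xi$ that corrupts the observation must itself be correlated with $u$ or $z$ through the dynamics, or else $\rho$ enters differently. The resolution is that the relevant $z$ here is the \emph{full} nonlinear feature and the correlation bound $\rho(u,z)$ controls a genuine cross term $\Sigma_{uz}$ that need not vanish; so I would redo the normal equation keeping $\Sigma_{uz} \ne 0$: $P(BB^\top + \Sigma_{zz} + 2\,\mathrm{sym}(\Sigma_{uz}^\top B^\top\text{-type terms}) + \sigma^2 I) = B^\top + \Sigma_{uz}$, where $\Sigma_{uz}$ has components in $V^\perp$. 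That off-$V$ right-hand side is exactly what pushes a nonzero $P_2$.

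Carrying this through, the $V$-block still yields $P_1(BB^\top + \sigma^2\Pi_V) = B^\top$ plus possibly a correction; in the regime where the clean identity $P_1 = B^+$ is claimed, the $V$-part of $\Sigma_{uz}$ must vanish (which is where Assumption~\ref{assume:nonlinear}-style reasoning enters: $z$'s $V$-component is essentially absent), giving $P_1(BB^\top + \sigma^2\Pi_V)=B^\top$, hence $P_1 = B^+$ after checking the minimal-norm constraint kills the $\sigma^2$ ambiguity on $\ker B^\top$. The $V^\perp$-block then reads $P_2(\Sigma_{zz}^{\perp} + \sigma^2 \Pi_{V^\perp}) = (\Sigma_{uz})^\perp - (\text{coupling through }P_1)$, and I would bound $\|P_2\|_2$ by inverting the left operator (lower bounded by $\sigma^2$, or by the $z$-variance) and estimating the right side: $\|(\Sigma_{uz})^\perp\|$ is controlled by the canonical correlation $\rho = \rho(u,z)$ times the geometric means of the variances of $u$ and $z$, and the $P_1$-coupling term contributes the $\|B^+\|_2\|P_1\|_2$ factor. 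Matching constants — the $\frac{1}{2}\sigma\rho/\sqrt{1-\rho^2}$ shape — would come from the standard identity that projecting out the best linear predictor leaves residual variance scaled by $1-\rho^2$, so the inverse operator norm carries a $1/\sqrt{1-\rho^2}$ and the cross-covariance carries a $\rho$, with $\sigma$ entering through $\lambda_{\max}(\Sigma_{\xi\xi})$. The main obstacle I anticipate is exactly this bookkeeping: correctly identifying which covariance/cross-covariance blocks survive the orthogonal splitting and tracking how $\rho$ propagates from the abstract canonical-correlation definition into a bound on the operator $\Sigma_{uz}$ and on the Schur-complement-type inverse, so that the final constant comes out as stated rather than off by a factor. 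Getting $P_1 = B^+$ exactly (not approximately) also requires care with the minimal-norm tie-breaking, since the $\sigma^2\Pi_V$ term makes $BB^\top + \sigma^2\Pi_V$ invertible on $V$ and one must argue the unique minimal-norm $P$ restricted to $\mathrm{col}(B)$ recovers the pseudoinverse while vanishing on $V \ominus \mathrm{col}(B)$.
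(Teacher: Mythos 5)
Your overall strategy -- write the normal equation $P\Sigma_{xx}=\Sigma_{ux}$, split it along $V\oplus V^{\perp}$, and bound $P_2$ by inverting a Schur-complement-type operator whose conditioning involves $1-\rho^2$ -- is the same family of argument the paper uses. But the execution has a concrete failure point: you first assert that $\Sigma_{xx}$ has no off-diagonal blocks with respect to $V\oplus V^{\perp}$ ``because all three covariance pieces are block-diagonal,'' derive $P_2=0$, and only then backtrack. In this one-step model $z$ is a nonlinear function of the state reached by applying $u$, so $\Sigma_{uz}\neq 0$ and $\Sigma_{xx}$ contains the genuinely off-diagonal blocks $B\Sigma_{uz}$ (mapping $V^{\perp}$ to $V$) and $\Sigma_{zu}B^{\top}$. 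The paper's two coupled block equations are
$0 = B\Sigma_{uu}(B^{\top}P_1-I)+B\Sigma_{uz}P_2+\sigma^2P_1$ and $0=\Sigma_{zu}(B^{\top}P_1-I)+\Sigma_{zz}P_2+\sigma^2P_2$, and the entire content of the bound is in \emph{eliminating} the common factor $(B^{\top}P_1-I)$ between them (multiply the first by $\Sigma_{zu}(B\Sigma_{uu})^{+}$ and subtract) to obtain
$(\sigma^2 I+\Sigma_{zz}-\Sigma_{zu}\Sigma_{uu}^{-1}\Sigma_{uz})P_2=\sigma^2\Sigma_{zu}(B\Sigma_{uu})^{+}P_1$. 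You never perform this elimination; your $V^{\perp}$-block as written has right-hand side $(P_1B-I)\Sigma_{uz}$, which vanishes the moment you also claim $P_1B=I$, sending you back to the trivial $P_2=0$. Relatedly, your route to $P_1=B^{+}$ via $P_1(BB^{\top}+\sigma^2\Pi_V)=B^{\top}$ does not work: that equation has the ridge solution $B^{\top}(BB^{\top}+\sigma^2\Pi_V)^{-1}$, and no minimal-norm tie-breaking removes the $\sigma^2$ shrinkage on $\mathrm{col}(B)$.

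The second gap is the constant. You explicitly defer ``matching constants,'' but the factor $\tfrac{1}{2\sqrt{1-\rho^2}}$ is not generic bookkeeping: the paper defines $C=Q_z-(\Sigma_{zz}^{1/2})^{+}\Sigma_{zu}\Sigma_{uu}^{-1}\Sigma_{uz}(\Sigma_{zz}^{1/2})^{+}$ (whose smallest nonzero singular value is $1-\rho^2$), factors the Schur complement through $\sigma^{-1}\Sigma_{zz}^{1/2}C^{1/2}$, and invokes a separate lemma that $\sigma_{\min}(A^{+}+A^{\top})\geq 2$ to produce the $2$ in the denominator. Your heuristic (``residual variance scales by $1-\rho^2$'') would at best give a $1/(1-\rho^2)$-type factor without the $2$, so the stated inequality is not reached. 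In short: right skeleton, but the coupling between the two blocks, the identification of $P_1$, and the derivation of the constant all need to be done correctly for the proof to go through.
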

Note that ideally we want $P_2 = 0$, since its rows are in $V^\perp$. 
This theorem says that the spectral norm of $P_2$ is small compared to $P_1$, 
which allows us to approximately recover $B^+$.

\begin{proof}
In this setting, the optimality conditions of \eqref{eq:noisy_regression} take the form
\begin{align}
\label{eq:opt1noise}
0 &= B\Sigma_{uu}(B^\top P_1 - I) + B\Sigma_{uz}P_2 + \sigma^2 P_1\\
\label{eq:opt2noise}
0 &= \Sigma_{zu}(B^\top P_1 - I) + \Sigma_{zz}P_2 + \sigma^2 P_2,
\end{align}	
Multiplying \eqref{eq:opt1noise} by $\Sigma_{zu}(B\Sigma_{uu})^+$ and subtracting \eqref{eq:opt2noise} yields the following identity (after simplification):
\begin{equation}
\label{eq:opt3noise}
(\sigma^2I+ \Sigma_{zz}-\Sigma_{zu}\Sigma_{uu}^{-1}\Sigma_{uz})P_2 = \sigma^2\Sigma_{zu}(B\Sigma_{uu})^+P_1.
\end{equation}

Let $Q_z$ be the (orthogonal) projection onto the column-span of $\Sigma_{zz}$, and note that we can write $Q_z = (\Sigma_{zz}^{1/2})^+\Sigma_{zz}^{1/2}$. 
Define $C = Q_z - (\Sigma_{zz}^{1/2})^+\Sigma_{zu}\Sigma_{uu}^{-1}\Sigma_{uz}(\Sigma_{zz}^{1/2})^+$. Note that $(\Sigma_{zz}^{1/2})^+\Sigma_{zu}\Sigma_{uu}^{-1}\Sigma_{uz}(\Sigma_{zz}^{1/2})^+$ has maximal eigenvalue $\rho^2$. Then $C$ has column-span equal to that of $\Sigma_{zz}$, with minimal nonzero singular value equal to $1-\rho^2$.

Set $\Gamma = (\sigma^{-1}\Sigma_{zz}^{1/2}C^{1/2})^+ + (\sigma^{-1}\Sigma_{zz}^{1/2}C^{1/2})^\top$.
Based on the properties of $C$ that we established, it is evident that $\Gamma$ has column-span equal to that of $\Sigma_{zz}$, and it has minimal singular value bounded below by $2$ by Lemma \ref{lem:lowest_sval_bound}. 
We have
\begin{align*}
P_2 &= (C^{1/2}\Gamma)^+C^{1/2}\Gamma P_2\\
 &= (C^{1/2}\Gamma)^+\sigma^{-1}(\Sigma_{zz}^{1/2})^+\left(\sigma^2 I + \Sigma_{zz} - \Sigma_{zu}\Sigma_{uu}\Sigma_{uz}\right)P_2 \\
&= (C^{1/2}\Gamma)^+\sigma^{-1}(\Sigma_{zz}^{1/2})^+\sigma^2\Sigma_{zu}\Sigma_{uu}^{-1}B^+P_1\\
&= \sigma\Gamma^+(C^{1/2})^+((\Sigma_{zz}^{1/2})^+\Sigma_{zu}\Sigma_{uu}^{-1/2})\Sigma_{uu}^{-1/2}B^+P_1.
\end{align*}
Now $(\Sigma_{zz}^{1/2})^+\Sigma_{zu}\Sigma_{uu}^{-1/2}$ must have maximal singular value equal to $\rho$, since it gives a symmetric low-rank factorization of $(\Sigma_{zz}^{1/2})^+\Sigma_{zu}\Sigma_{uu}^{-1}\Sigma_{uz}(\Sigma_{zz}^{1/2})^+$.
Hence, we finally have the bound
\[
\|P_2\|_2 \leq \frac{\sigma\rho}{2\sqrt{1-\rho^2}} \|\Sigma_{uu}^{-1/2}B^+\|_2\|P_1\|_2.
\]

\end{proof}	
	
\begin{lemma}
\label{lem:lowest_sval_bound}
For any matrix $A$, the minimal nonzero singular value of $A^+ + A^\top$ is at least 2.
\end{lemma}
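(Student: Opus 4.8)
The plan is to diagonalize $A$ and exploit the fact that $A^+$ and $A^\top$ are built from the \emph{same} pair of singular bases (with the roles of left and right vectors swapped), so that $A^+ + A^\top$ inherits a singular value decomposition whose nonzero singular values are exactly the numbers $\sigma + 1/\sigma$ as $\sigma$ ranges over the nonzero singular values of $A$.

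First I would dispose of the trivial case $A = 0$: then $A^+ + A^\top = 0$ has no nonzero singular values, so the statement holds vacuously. Otherwise, write a compact singular value decomposition $A = \sum_{i=1}^r \sigma_i u_i v_i^\top$, where $r = \text{rank}(A)$, each $\sigma_i > 0$, the vectors $u_1,\ldots,u_r$ are orthonormal, and the vectors $v_1,\ldots,v_r$ are orthonormal. The standard formulas then give $A^+ = \sum_{i=1}^r \sigma_i^{-1} v_i u_i^\top$ and $A^\top = \sum_{i=1}^r \sigma_i v_i u_i^\top$, hence
\[
A^+ + A^\top \;=\; \sum_{i=1}^r \left(\sigma_i + \sigma_i^{-1}\right) v_i u_i^\top .
\]

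Next I would verify that the right-hand side is, up to the (all positive) scalars, an SVD of $M := A^+ + A^\top$. Using $u_i^\top u_j = \delta_{ij}$,
\[
M M^\top \;=\; \sum_{i,j=1}^r \left(\sigma_i + \sigma_i^{-1}\right)\left(\sigma_j + \sigma_j^{-1}\right) v_i \, (u_i^\top u_j)\, v_j^\top \;=\; \sum_{i=1}^r \left(\sigma_i + \sigma_i^{-1}\right)^2 v_i v_i^\top .
\]
Since the $v_i$ are orthonormal, the nonzero eigenvalues of $MM^\top$ are precisely the numbers $(\sigma_i + \sigma_i^{-1})^2$, so the nonzero singular values of $M$ are precisely $\{\sigma_i + \sigma_i^{-1} : 1 \le i \le r\}$. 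By the AM--GM inequality, $\sigma_i + \sigma_i^{-1} \ge 2$ for every $\sigma_i > 0$, which gives the claim.

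I do not anticipate a genuine obstacle here; the only point that warrants care is the observation that the same orthonormal pairs $(u_i, v_i)$ appear simultaneously (with transposes interchanged) in the expansions of $A^+$ and of $A^\top$ — this is exactly what kills the cross terms in $MM^\top$ and lets us read off the singular values of the sum directly, rather than only bounding them.
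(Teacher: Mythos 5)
Your proposal is correct and follows essentially the same route as the paper's proof: both express $A^+$ and $A^\top$ over the shared singular bases of $A$, conclude that the nonzero singular values of the sum are exactly $\sigma + \sigma^{-1}$, and apply AM--GM. Your version simply spells out the verification (via $MM^\top$) that the paper leaves implicit.
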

\begin{proof}
Write the compressed SVD of $A^+$ as $U\Sigma V^\top$, and note that we can write $A^\top = U\Sigma^{-1}V^\top$.
It is then evident that the non-zero singular values of $A^+ + A^\top$ are of the form $x + x^{-1}$ for $x > 0$.
But $x+x^{-1} \geq 2$ for all $x > 0$.
\end{proof}

\subsection{Nonlinear Model}
We here give a proof of Theorem \ref{thm:nonlinear}.
For this section, instead of writing $\phi(x)$ to denote the state representation of $x$, we simply drop explicit reference to $\phi$ and agree that any system state we discuss has already been mapped to its representation via $\phi$. This will simplify notation but doesn't change any of the analysis. 

\begin{theorem}
Let $\phi, P, \{L_i, T_i\}, i=1,\ldots,\tau$ be optimal solutions to the optimization problem \eqref{eq:nonlinear_objective}, and assume that these parameters incur zero loss.
Define $V =\text{col}(P^\top) + \text{col}(L_1^\top) + \cdots + \text{col}(L_{\tau-1}^\top)$, and assume that $\text{col}(L_{\tau}^\top) \subset V$.
Let $Q$ be the orthogonal projection matrix onto $V$. 
Then there exist matrices $A \in \mathbb{R}^{n\times n}$ and $B \in \mathbb{R}^{n\times l}$ such that 
\[
Qf(x,u) = Qf(Qx,u) = AQx + Bu.
\]
\end{theorem}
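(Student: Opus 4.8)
\emph{Proof plan.} The idea is that zero population loss in \eqref{eq:nonlinear_objective} forces its integrand to vanish pointwise, and then ``sliding'' the resulting family of linear identities one step along the trajectory makes the control terms telescope away. Concretely, assuming the distributions generating the data have full support (so that the equations hold for all states and controls), zero loss yields, for each $i = 1, \ldots, \tau$,
\[
P x_i = L_i x_0 + \sum_{k=1}^{i-1}T_k u_{i-1-k} + u_{i-1},
\]
where $x_{t+1} = f(x_t, u_t)$. Since every suffix of a trajectory is again a trajectory, applying this identity to the trajectory started at $x_1 = f(x_0, u_0)$ with controls $u_1, u_2, \ldots$ gives $P x_{i+1} = L_i x_1 + \sum_{k=1}^{i-1}T_k u_{i-k} + u_i$ for $i = 1, \ldots, \tau-1$. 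Subtracting this from the index-$(i+1)$ identity of the original trajectory, valid for $1 \le i \le \tau - 1$, cancels the common controls $u_1, \ldots, u_i$ and leaves
\[
L_i x_1 = L_{i+1} x_0 + T_i u_0, \qquad i = 1, \ldots, \tau - 1.
\]

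I would then stack these $\tau - 1$ identities on top of the $i = 1$ case of the first display, $P x_1 = L_1 x_0 + u_0$. Letting $\Phi$ denote the vertical concatenation of $P, L_1, \ldots, L_{\tau-1}$, letting $\Psi$ denote the vertical concatenation of $L_1, \ldots, L_\tau$, and letting $\mathcal{T}$ denote the vertical concatenation of $I, T_1, \ldots, T_{\tau-1}$, the stacked system is
\[
\Phi x_1 = \Psi x_0 + \mathcal{T} u_0, \qquad x_1 = f(x_0, u_0).
\]
The row space of $\Phi$ is exactly $\text{col}(P^\top) + \text{col}(L_1^\top) + \cdots + \text{col}(L_{\tau-1}^\top) = V$, so $\Phi^+\Phi$ is the orthogonal projector $Q$ onto $V$, and left-multiplying by $\Phi^+$ turns the left-hand side into $Q x_1$. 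This is exactly where the hypothesis $\text{col}(L_\tau^\top) \subset V$ enters: it forces the row space of $\Psi$ to lie in $V$ as well, i.e. $\Psi = \Psi Q$, so the right-hand side can be written using only $Q x_0$. Taking $A := \Phi^+ \Psi Q$ and $B := \Phi^+ \mathcal{T}$ gives $Q f(x_0, u_0) = A\, Q x_0 + B\, u_0$ for arbitrary $x_0, u_0$, and the identity $Q f(x,u) = Q f(Q x, u)$ simply records that the right-hand side depends on $x_0$ only through $Q x_0$.

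The step I expect to be the main obstacle is the index bookkeeping in ``slide and subtract'': one must check that the telescoped control terms genuinely collapse to the single term $T_i u_0$, and that it is legitimate to apply the zero-loss identity to the shifted trajectory starting at $x_1$ -- this is where full support of the initial-state distribution is needed, since otherwise the objective never sees $x_1$ as an initial condition. A secondary point to record carefully is the identity $\Phi^+\Phi = Q$, which is just the fact that $\Phi^+\Phi$ is always the orthogonal projection onto the row space of $\Phi$, combined with the definition of $V$. Everything else -- rewriting $\Psi = \Psi Q$ and reading off $A$ and $B$ -- is routine linear algebra.
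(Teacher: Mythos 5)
Your proposal is correct and follows essentially the same route as the paper's proof: both derive the key one-step identity $L_{i}f(x,u_0) = L_{i+1}x + T_{i}u_0$ by applying the zero-loss relations to the shifted trajectory starting at $f(x,u_0)$ and cancelling the common control terms, and both then use the containment $\text{col}(L_\tau^\top)\subset V$ to close the system on $V$. The only difference is cosmetic: you assemble the conclusion via the pseudoinverse identity $\Phi^+\Phi = Q$ for the stacked matrix, whereas the paper expands $Q=\sum_j v_jv_j^\top$ over an orthonormal basis of $V$ with each $v_j$ written as a combination of the rows of $P, L_1,\ldots,L_{\tau-1}$ --- these are equivalent, and your packaging is arguably cleaner.
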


\begin{proof}
Zero loss in the objective function implies that
\[
Pf(x,\{u_0,\ldots,u_{i-1}) = L_i x + \sum_{k=0}^{i-2}T_{i-1-k}u_k + u_{i-1}
\]
 for all $x \in \phi(\mathbb{R}^d)$ and $u_j \in \mathbb{R}^l$, $j = 0,\ldots,i-1$. 

Fix $2 \leq i \leq \tau+1$.
By assumption,
\[
Pf(x, \{u_0,\ldots,u_{i-1}\}) = L_ix + \sum_{k=0}^{i-2}T_{i-1-k}u_k + u_{i-1}.
\] 
But we can also express this as follows:
\begin{align*}
Pf(x, \{u_0,\ldots,u_{i-1}\}) &= Pf(f(x,u_0),\{u_1,\ldots,u_{i-1}\})\\
&= L_{i-1}f(x,u_0) + \sum_{k=0}^{i-3}T_{i-2-k}u_{k+1} + u_{i-1}
\end{align*}
Equating these two expressions and eliminating like terms gives
\[
L_{i-1}f(x,u_0) = L_i x + T_{i-1}u_0.
\]

Note that here it is crucial that we couple the $T_i$ matrices. Without the coupling we would not be able to eliminate the terms relating $u_i$ for $i>0$.

Next, let $\{v_1, \ldots, v_r\}$ be an orthonormal basis for $V$. Then we can write $Q = \sum_{j=1}^r v_jv_j^\top$.
Furthermore, by construction, for each $v_j$, there exist vectors $y_{j,0}, y_{j,1}, \ldots, y_{j,\tau}$ such that $v_j = P^\top y_{j,0}+ \sum_{i=1}^\tau L_i^\top y_{j,i}$.
Notice that for $i=1,\ldots,\tau+1$, since $\text{col}(L_{i}^\top) \subset V$, it holds that $L_i = L_i Q$. 
Then we have 
\begin{align*}
Qf(x,u) &= \sum_{j=1}^r v_jv_j^\top f(x,u)\\
&= \sum_{j=1}^rv_j\left(y_{j,0}^\top Pf(x,u) + \sum_{i=1}^\tau y_{j,i}^\top L_{i}f(x,u)\right) \\
&= \sum_{j=1}^r v_j\left(y_{j,0}^\top (L_1x +u) + \sum_{i=1}^\tau y_{j,i}^\top (L_{i+1}x + T_i u)\right)\\
&= \left(\sum_{j=1}^r\sum_{i=0}^\tau v_jy_{j,i}^\top L_{i+1}\right)Qx + \left(\sum_{j=1}^r\sum_{i=0}^\tau v_jy_{j,i}^\top T_i\right)u
\end{align*}
where we let $T_0 = I$.
Now set $A =\sum_{j=1}^r\sum_{i=0}^\tau v_jy_{j,i}^\top L_{i+1}$ and $B = \sum_{j=1}^r\sum_{i=0}^\tau v_jy_{j,i}^\top T_i$, and we have our result.
\end{proof}

\section{Synthetic Experiments}
In this section we discuss in detail how to obtain the particular minimal-norm solution to \eqref{eq:empirical_problem} that we require in Theorem \ref{thm:sample_complexity}. We then discuss synthetic numerical experiments that we conducted to validate the correctness of this result.

\subsection{Constructing the Solution} It simplifies things to consider the least squares problem
\[
\min_{x,y} \|Ax + By - c\|_2^2,
\]
where $A$ and $B$ are arbitrary matrices and $c$ is an arbitrary vector. Assume the space of solutions $\{x,y\}$ that have zero error is nonempty (i.e. it is an entire linear space of solutions).
We want to select the optimal solution $(x^*, y^*)$ such that for any other optimal solution $(x', y')$, we have $\|x^*\|_2 \leq \|x'\|_2$ and if $\|x^*\|_2 = \|x'\|_2$ then $\|y^*\|_2 \leq \|y'\|_2$.

We can obtain such a solution by splitting the problem into two stages. First, let $x^*$ be the minimal norm solution of 
\[
\min_x \|(I-P_B)Ax - (I-P_B)c\|_2^2,
\]
where $P_B$ is the orthogonal projection onto the column-span of $B$.
We can compute $x^*$ using standard least squares techniques such as using the singular value decomposition.
Then, let $y^*$ be the minimal norm solution to
\[
\min_y \|By - P_Bc + P_BAx^*\|_2^2.
\]

Let us verify that $(x^*, y^*)$ has the desired properties. 
Let $(x', y')$ be any solution, i.e. $Ax' + By' = c$. 
Left-multiplying the equation by $I-P_B$, we see that $(I-P_B)Ax' = (I-P_B)c$. 
By construction, we have that $\|x^*\|_2 \leq \|x'\|_2$. 
Now assume that $\|x^*\|_2 = \|x'\|_2$. 
This implies that $x^* = x'$ (the minimum-norm solution is unique).
Then we have $By' = P_B c - P_BAx' = P_Bc - P_BAx^*$.
Again, by construction we have that $\|y^*\|_2 \leq \|y'\|_2$, as desired.

\subsection{Numerical Verification}
To numerically validate our theoretical result, we generated system matrices $\bar{A}, \bar{B}$ at random (with i.i.d. Guassian entries) and multiplied $\bar{A}$ by a constant to ensure it is well-conditioned (to avoid numerical issues). We generated the nonlinear components $z_i$ either as independent Guassian nosie or low-degree polynomials of $h_i$. We collected $5(d+rl)$ samples for each run (this is lower than the sample complexity we give in the theorem, but it sufficed for our experiments). We then constructed the solution using the two-step procedure described above, using the built-in ``lstsq'' function in SciPy. We then checked that our constructed solution matched the solution guaranteed by Theorem \ref{thm:sample_complexity}. 
In all of our runs, whenever the computations were numerically stable, we recovered the expected solution. 
\section{Experimental Details for Section~\ref{sec:experiments}}
We here provide further details about the experiments discussed in Section~\ref{sec:experiments}.

As mentioned, we repeat the action three times for both environments, and the resulting concatenated pixel observations have sizes
$(64,192)$ and $(80, 360)$ for the pendulum and mountain car environments, respectively. Figure~\ref{fig:pixel_observations} displays examples of these state observations.

\begin{figure}[t]
\vskip 0.2in
\begin{center}
\centerline{\includegraphics[width=\columnwidth]{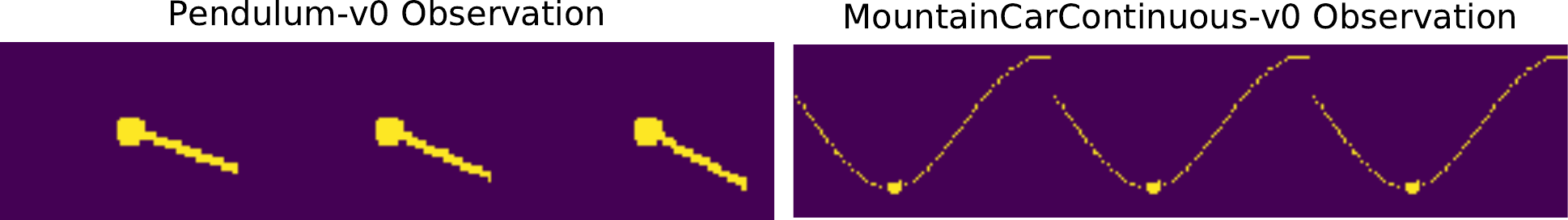}}
\caption{Pixel observations for the environments tested.}
\label{fig:pixel_observations}
\end{center}
\vskip -0.2in
\end{figure}

The state representation map $\phi$ is a basic neural network with two convolutional layers (each with 16 output channels, the first layer with kernel size 8 and stride 4, the second layer with kernel size 4 and stride 2) followed by two fully connected layers each of width $50$.
All layers use ReLu activation with no other nonlinearities.
After the final layer, we project to the top 4 right singular directions of the matrix $[P^\top \,\, L_1^\top \,\, \cdots \,\, L_\tau^\top]^\top$, so that in the end, we have a 4-dimensional representation.

To train the network $\phi$ and the matrices $P, \{L_i\}$, we solve \eqref{eq:nonlinear_objective} using the Adam optimizer with learning rate $0.0005$. The representations corresponding to the ``pre-trained'' results in Figure \ref{fig:learning_curves} were trained on 20,000 independent batches of 25 trajectories. Each trajectory was length 25, with the first state sampled uniformly from the environment state space, and each action sampled uniformly from the environment action space. We observed that the loss function converged to a nonzero value, which means there may be room to better learn the inverse model if we explore different architecture or training options. 

The representations corresponding to the ``total samples'' results in Figure \ref{fig:learning_curves} were trained on a fixed set of batches of trajectories that utilize a total of 35,000 and 25,000 environment steps for pendulum and mountain car, respectively. 
We found that training the representations on smaller amounts of environment steps led to poor performance on the control tasks, which may indicate that the representations overfit to the limited data.
Because we train these representations on a fixed, small dataset, the loss function converges to a value much closer to 0.

As mentioned in Section \ref{sec:experiments}, we use the Stable Baselines implementation of TRPO to learn a linear policy for our learned, 4-dimensional representations. We use all of the default parameters except for the stepsize parameter ``vf\_stepsize'', which we tested over the range of values $[0.00005, 0.0001, 0.0005, 0.001, 0.01, 0.1, 0.5]$. We observed similar performance for all of these choices, but reported the best results in Figure \ref{fig:learning_curves}, corresponding to $0.1$ and $0.0005$ for the pendulum experiments, and $0.1$ and $0.01$ for the mountain car experiments.

\section{Additional Experiments}
In this section we present some additional experimental results.

\paragraph{Intrinsic Reprentation Dimension}
In our main experiments, we projected our representations down to 4 dimensions. We inspected 
the intrinsic dimensionality of $[P^\top \,\, L_1^\top \,\, \cdots \,\, L_\tau^\top]$ and found that much of the energy was generally concentrated in the top four singular values. This agrees with our theoretical result, which requires that the learned matrices $P, \{L_i\}$ have low rank. Figure~\ref{fig:lowrank_rep} shows some examples of this behavior in representations learned for three environments (more on the ``HopperBulletEnv-v0'' environment below).

\begin{figure}[t]
\vskip 0.2in
\begin{center}
\centerline{\includegraphics[width=\columnwidth]{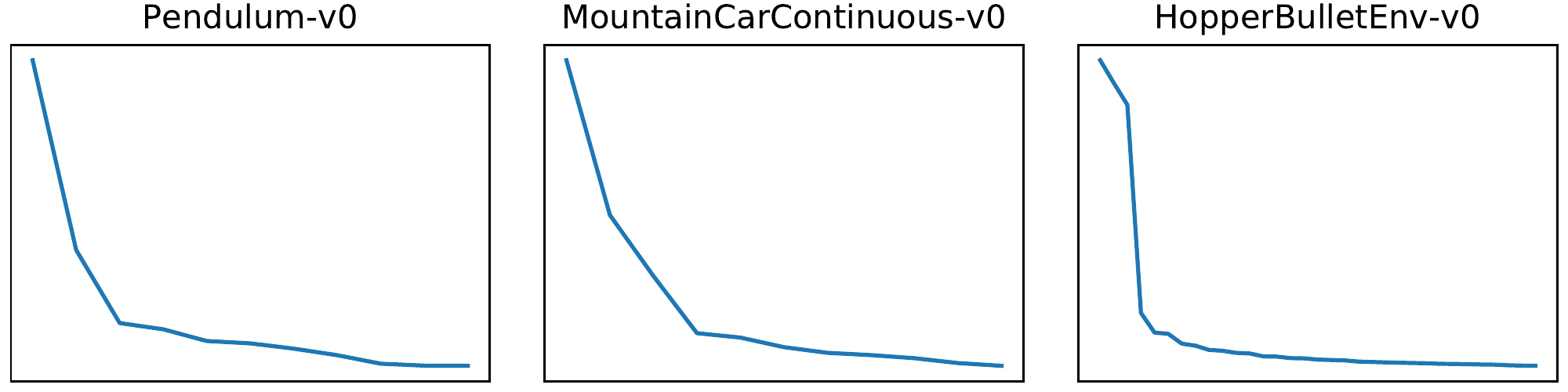}}
\caption{Singular values of learned representations.}
\label{fig:lowrank_rep}
\end{center}
\vskip -0.2in
\end{figure}

\paragraph{Experiments with Hopper}
We also conducted experiments using the environment ``HopperBulletEnv-v0'' from PyBullet. This environment simulates a robot whose internal state consists of 15 continuous values describing various joint angles and velocities, and whose action space is 3-dimensional, describing torques that can be applied to certain joints. The goal of this environment is to learn a policy that allows the Hopper robot to make forward progress. As with the other environments, we repeated each action 3 times to obtain concatenated pixel observations of shape $(80,321,3)$. Because the state space of this environment is much higher-dimensional than the other environments we tested, randomly sampling initial start states uniformly is a bad strategy, as most sampled states are very far from states that reasonable policies will traverse. 
We therefore adopted an imitation-learning style setup, where we use a pre-trained agent provided by Stable-Baselines to generate a set of good states, and use these examples as the initial states for our objective function. Given the good initial state, we then randomly sample actions to generate trajectories.
We trained our representations on a fixed set of 3,000 trajectories, and then projected onto the top 15 singular directions of the learned matrices, resulting in a 15-dimensional representation.

To learn policies, we used PPO2 with default parameters. The learned policies could keep the robot upright (it will fall to the ground if controlled with random actions) and sometimes achieve modest forward motion, but the performance is far below other baselines. To improve results, we likely need more training samples in the representation learning phase, and perhaps a way to combine state exploration with representation learning.

\end{document}